\documentclass{article}
\PassOptionsToPackage{numbers, compress}{natbib}

 \usepackage{algorithm}
 \usepackage{algorithmic}
 \usepackage{amsmath}
 \usepackage{mathtools}
 \usepackage{graphicx}
 \usepackage{amsthm}
 
 \usepackage[table]{xcolor}
 \usepackage[numbers, compress]{natbib}
 
 \newtheorem{proposition}{Proposition}

 \newtheorem{remark}{Remark}

 \usepackage{xcolor}
 \usepackage{wrapfig}
 \usepackage{subcaption}

 \usepackage[preprint]{neurips_2026}

\usepackage[utf8]{inputenc} 
\usepackage[T1]{fontenc}    
\usepackage{hyperref}       
\usepackage{url}            
\usepackage{booktabs}       
\usepackage{amsfonts}       
\usepackage{nicefrac}       
\usepackage{microtype}      
\usepackage{xcolor}         

\title{VASR: Variance-Aware Systematic Resampling for Reward-Guided Diffusion}

\author{%
Shivanshu Shekhar$^{1}$ \quad
Sagnik Mukherjee$^{1}$ \quad
Jia Yi Zhang$^{2}$ \quad
Tong Zhang$^{1}$ \\
$^{1}$ Siebel School of Computing and Data Science \quad
$^{2}$ Department of Statistics \\
University of Illinois Urbana-Champaign \\
{\tt\small \{shekhar6, sagnikm3, jyzhang5, tozhang\}@illinois.edu}
}

\begin{document}

\maketitle

\begin{abstract}
Sequential Monte Carlo (SMC) samplers for reward-guided diffusion models often suffer from rapid lineage collapse: a few high-reward particles dominate the population within a handful of resampling steps, destroying diversity and degrading sample quality. We propose a variance-decomposition framework for reward-guided diffusion SMC that separates continuation variance $V_t^{\mathrm{cont}}$ from residual variance $V_t^{\mathrm{res}}$, revealing that high offspring-count variance under the commonly used multinomial resampling drives this collapse. This motivates \textsc{VASR} (Variance-Aware Systematic Resampling), which addresses both variance terms via variance-optimal mass allocation $m_t \propto w_t e^{r_t}$ (minimizing $V_t^{\mathrm{cont}}$) and systematic resampling (controlling $V_t^{\mathrm{res}}$). For latent diffusion models where intermediate rewards are noisy due to stochastic continuations, we propose \textsc{VASR-Max}, a deliberately biased high-selection variant for variance-sensitive reward optimization. Both methods are training-free, fully parallelizable, and add only linear overhead. On MNIST and CIFAR-10, \textsc{VASR} achieves as high as $26\%$ better FID than prior SMC methods while remaining $\sim\!66\times$ faster than MCTS-based value methods at matched compute. On text-to-image generation, \textsc{VASR-Max} consistently outperforms the strongest SMC baseline across compute budgets and matches MCTS-based methods within $2.5$--$3\%$ reward at high budgets while being approximately $4\times$ faster.
\end{abstract}

\section{Introduction}
\label{sec:intro}

Diffusion models~\citep{ddpm,smld} have become a dominant paradigm for generative modeling, achieving state-of-the-art performance across modalities including images~\cite{ddpm, sdxl, ld}, video~\cite{cogvideox, veo3}, and language~\cite{llada2, mdlm}. In practical settings, however, generation must satisfy objectives beyond reproducing the training distribution, such as generating images that aligns with human preferences. A common requirement is \emph{reward alignment}, where generated samples should remain on the learned data manifold while maximizing a reward function $r : \mathcal{X} \rightarrow \mathbb{R}$ that captures human preferences, task objectives, or domain-specific constraints. Existing approaches to reward alignment broadly fall into two categories:

\paragraph{RL-based fine-tuning}
These approaches formulate alignment as a reinforcement learning objective applied to a pretrained generative model:
\begin{equation}
\mathcal{L}_{\text{RLHF}} =
\mathbb{E}_{\tau \sim \pi_\theta} \left[ R(\tau) \right]
- \beta \, \mathrm{KL}(\pi_\theta \| \pi_{\theta_{\text{ref}}}),
\label{eq:rlhf}
\end{equation}
where $\pi_{\theta_{\text{ref}}}$ is a reference model and the KL term regularizes deviations from the pretrained distribution. Instantiations include policy gradient methods~\cite{ddpo, dpok}, direct preference optimization~\cite{d3po, diffdpo}, and direct reward optimization~\cite{draft}. While effective, these methods require costly fine-tuning and must be retrained whenever the reward function changes.

\paragraph{Inference-time alignment}
An alternative direction avoids retraining by modifying the sampling procedure of a frozen diffusion model to target $\pi^{*}(x) \propto p_{\theta}(x)\exp(\lambda r(x)),$ where $p_{\theta}$ is the pretrained model and $\lambda \ge 0$ controls alignment strength. These methods allow rapid adaptation to new reward functions without retraining or additional compute-heavy optimization, and hence are usually preferred over finetuning based methods.

Several families of inference-time methods have been proposed. Gradient-based approaches~\cite{universal_guidance, dps} bias the denoising trajectory using reward gradients, which requires differentiable reward models that may destabilize the denoising process by pulling trajectories off the data manifold \cite{dts}. Search-based methods~\cite{uhera, uhera2} perform local reward-guided exploration, while value based methods~\cite{dts} learn a value function via Monte Carlo rollouts and sample greedily, though both families are computationally impractical at scale. Particle-based methods sidestep both pitfalls: they scale tractably and do not require differentiable reward models. These methods~\cite{smc1, dou2024diffusion, fkd} maintain a population of K particles during inference and, at each resampling step, draw a new population by sampling particles in proportion to their current reward estimates. However, existing SMC methods suffer from diversity collapse.

We trace this phenomenon of diversity collapse to the commonly used multinomial resampling in current SMC based methods, which introduces offspring-count fluctuations of order $\mathcal{O}(K)$ per particle, leading to rapid and often irreversible loss of diversity that severely degrades sample quality and distributional coverage. We address this through a \textbf{variance-decomposition framework} that separates the total estimator variance into two components: continuation variance $V_t^{\mathrm{cont}}$ from stochastic particle propagation, and residual variance $V_t^{\mathrm{res}}$ from random offspring counts. We minimize $V_t^{\mathrm{cont}}$ through variance-optimal mass allocation stratergy (\S \ref{sec:method}) derived from the variance decomposition, and control $V_t^{\mathrm{res}}$ using systematic resampling (\S \ref{sec:prelim}) which bound offspring deviations as $|N_t^{(k)} - m_t^{(k)}| \le 1$ \cite{sys-2}. The resulting sampler, \textsc{VASR} (Variance-Aware Systematic Resampling), addresses both sources of variance in a principled manner. We also extend \textsc{VASR} to latent diffusion settings with \textsc{VASR-Max}, where intermediate Tweedie proxy rewards are highly noisy and the objective is to obtain a single high-quality sample; in this regime, concentrating offspring on the highest-scoring particle yields more robust selection.

We validate \textsc{VASR} across a range of three models, four tasks, and five scales and find that it consistently improves over existing approaches. In particular, it yields higher sample quality, better diversity, and more stable inference compared to prior SMC-based and value-based baselines. Our findings demonstrate that principled variance control is key to improving diffusion-based inference, establishing \textsc{VASR} as a scalable and effective approach for inference-time alignment. Our main contributions are:

\begin{enumerate}

\item \textbf{Variance decomposition for reward-guided diffusion SMC.}
We decompose per-step estimator variance into continuation variance $V_t^{\mathrm{cont}}$ (from particle propagation) and residual variance $V_t^{\mathrm{res}}$ (from offspring sampling). Both terms degrade sample quality under aggressive reward tilting, motivating paired control strategies. 


\item \textbf{Paired variance control via reweighting and bounded resampling.}
The continuation term $V_t^{\mathrm{cont}}$ is minimized through variance-optimal mass allocation $m_t^{(k)} \propto w_t^{(k)} \exp(\hat{r}_t^{(k)})$ derived from the decomposition. The residual term $V_t^{\mathrm{res}}$ is bounded via systematic resampling, which reduces per-particle variance from $\mathcal{O}(K)$ to $\mathcal{O}(1)$ \cite{sys-2}. This paired strategy yields the \textsc{VASR} sampler. 

\item \textbf{Empirical validation of the variance mechanism.}
Across MNIST, CIFAR-10, and Stable Diffusion, \textsc{VASR} improves distributional quality and lineage diversity over SMC baselines while remaining fully parallelizable. Ablations confirm the mechanism: jointly controlling both variance terms preserves lineages, improves FID, and yields better reward scaling under matched compute.
\end{enumerate}

\section{Related Work}
Existing inference-time alignment approaches for diffusion models fall into three broad categories: gradient-based guidance, search/value-based methods, and particle-based sampling.

\textbf{Gradient-based Guidance.}
Gradient-based methods steer the model towards the desired objective~\cite{universal_guidance, dps, he2023manifoldpreservingguideddiffusion,song2021scorebasedgenerativemodelingstochastic, song2023pseudoinverseguided} using the gradient of the value or score function. This is closely related to classifier guidance \cite{dhariwal2021diffusionmodelsbeatgans} and has been applied to incorporate semantic constraints, reward models, and preference signals during sampling. While effective, these methods are inapplicable to non-differentiable or discrete-state settings and introduces per-step gradient cost that scales with model size.



\textbf{Search and Value-based Methods.}
Search-based methods generate multiple candidate denoising transitions and select higher-reward trajectories, either via greedy local search or by framing inference as a search problem over noise trajectories with verifier feedback~\cite{uhera, uhera2, search1}. While these approaches can improve sample quality, they typically rely on shallow look-ahead and may overfit to verifier biases under aggressive search~\cite{search1}. Value function-based methods instead estimate soft values at intermediate states to guide sampling. Diffusion Tree Sampling (DTS)~\cite{dts} formulates the reverse process as a finite-horizon tree and applies Monte Carlo Tree Search with soft Bellman backups, achieving strong asymptotic guarantees and improved compute efficiency. However, its sequential tree construction limits parallelism: at matched NFE budgets, DTS is 4-66$\times$ slower than fully parallel particle-based methods~\cite{dts}.

\textbf{Particle-based Methods.}
Particle-based methods~\cite{tds, dou2024diffusion, fkd, trippe2023diffusionprobabilisticmodelingprotein} propagate multiple candidate trajectories (particles) simultaneously through the diffusion process, resampling them at intermediate steps according to potential functions that approximate the soft value. Sequential Monte Carlo (SMC) enables reward-guided generation with frozen model weights. SMC is theoretically guaranteed to recover the target distribution given exact potentials and infinite particles \cite{fk}. The core bottleneck for sample quality in particle-based diffusion alignment is that the repeated resampling step can dramatically reduce diversity, leading to particle degeneracy and lineage collapse when a few high-weight trajectories dominate the population.

Our approach is particle-based but distinguished by a variance-decomposition framework that separates continuation and residual variance. This decomposition reveals both terms as bottlenecks in reward-guided diffusion and motivates paired control: variance-optimal mass allocation for $V_t^{\mathrm{cont}}$ and systematic resampling for $V_t^{\mathrm{res}}$.

\section{Preliminaries}
\label{sec:prelim}

\paragraph{Diffusion Models.}
A diffusion model defines a forward noising process
$
q(x_t \mid x_0)
=
\mathcal{N}\!\left(
x_t;\,
\sqrt{\bar{\alpha}_t}\,x_0,\,
(1-\bar{\alpha}_t)I
\right),
$
where $\bar{\alpha}_t = \prod_{s=1}^{t}\alpha_s$. A neural network $\epsilon_\theta(x_t,t)$ is trained to predict the injected noise, implicitly learning the score of the data distribution. Given a noisy sample $x_t$, the Tweedie estimate of the clean sample is
\begin{equation}
\hat{x}_0(x_t,t)
=
\frac{x_t - \sqrt{1-\bar{\alpha}_t}\,
\epsilon_\theta(x_t,t)}{\sqrt{\bar{\alpha}_t}},
\end{equation}
which corresponds to the posterior mean $\mathbb{E}[x_0 \mid x_t]$~\cite{ddim}. Sampling is typically performed using DDIM~\cite{ddim}. We write $\mathrm{DDIM}_{\eta}(x_t,t)$ for the DDIM update from $x_t$ to $x_{t-1}$ with stochasticity parameter $\eta \in [0,1]$, given by
\begin{equation}
x_{t-1}
=
\sqrt{\bar{\alpha}_{t-1}}\,\hat{x}_0(x_t,t)
+
\sqrt{1-\bar{\alpha}_{t-1} - \sigma_t^2(\eta)}\,
\epsilon_\theta(x_t,t)
+
\sigma_t(\eta)\,\xi_t,
\qquad
\xi_t \sim \mathcal{N}(0, I),
\end{equation}
where $\sigma_t^2(\eta) = \eta^2 \cdot \tfrac{1 - \bar{\alpha}_{t-1}}{1 - \bar{\alpha}_t}\bigl(1 - \tfrac{\bar{\alpha}_t}{\bar{\alpha}_{t-1}}\bigr)$ controls the scheduler-scaled noise injection. Setting $\eta = 0$ recovers the deterministic update in which the only randomness arises from the initial noise $x_T \sim \mathcal{N}(0,I)$, while $\eta = 1$ recovers the DDPM \cite{ddpm} ancestral sampler.

\paragraph{Reward-Tilted Sampling via Particle Methods.}
Let $r : \mathcal{X} \rightarrow \mathbb{R}$ denote a reward function. Inference-time alignment aims to sample from the reward-tilted distribution $p^*(x_0) \propto p_\theta(x_0)\exp(\lambda r(x_0)),$ where $p_\theta$ is the pretrained diffusion model and $\lambda \ge 0$ controls alignment strength. Since $r(x_0)$ is inaccessible at intermediate noisy states $x_t$, it is usually approximated via the Tweedie estimate $\hat{x}_0(x_t, t)$, making direct optimization tractable. Particle methods approximate $p^*$ using $K$ weighted particles $\{(x_t^{(i)}, w_t^{(i)})\}_{i=1}^K$ that alternate between selection and mutation. At each resampling step $t \in \mathcal{T}$, weights are updated using a potential $G_t(x_t^{(i)})$, typically defined as a function of rewards computed via the Tweedie proxy: $w_t^{(i)} \;\propto\; w_{t+1}^{(i)}\, G_t(x_t^{(i)})$. Particles are then propagated via DDIM, and the empirical measure $\hat{\pi}_t^K = K^{-1}\sum_{i=1}^K \delta_{x_t^{(i)}}$ (where $\delta_{x}$ denotes the Dirac delta at $x$) converges to $p^*$ as $K \to \infty$~\citep{fk}. To mitigate weight degeneracy, resampling periodically replaces low-weight particles with copies of high-weight ones. However, poorly designed resampling destroys particle diversity, rendering the approximation uninformative.

\paragraph{Resampling: Multinomial vs.\ Systematic.} A resampling step replaces the current particle set by offspring counts $(N_t^{(1)},\ldots,N_t^{(K)})$ drawn with respect to positive masses $m_t^{(1)},\ldots,m_t^{(K)}$ with $\sum_k m_t^{(k)} = K$, where $m_t^{(k)}$ encodes how many offspring particle $k$ should receive on average. The method is \emph{unbiased} if $\mathbb{E}[N_t^{(k)}] = m_t^{(k)}$ and $\sum_k N_t^{(k)} = K$ almost surely, in which case assigning offspring weight $\tilde w_t^{(k)} = w_t^{(k)}/m_t^{(k)}$ preserves the weighted empirical measure in expectation~(Proposition~\ref{prop:unbiased}, Appendix~\ref{app:vasr-allocation}). \emph{Multinomial resampling} draws 
\begin{equation}
(N_t^{(1)},\ldots,N_t^{(K)}) \sim \mathrm{Multinomial}(K;\,m_t^{(1)}/K,\ldots,m_t^{(K)}/K),\
\label{eq:multinomial}
\end{equation}
 giving $\mathrm{Var}(N_t^{(k)}) = m_t^{(k)}(1 - m_t^{(k)}/K)$. Under aggressive reward tilting, a dominant particle with $m_t^{(k)} = \Theta(K)$ exhibits offspring-count variance $\mathcal{O}(K)$, directly inflating the variance of any downstream estimator~\citep{fkd,sys-ref, sys-2}. \emph{Systematic resampling}~\citep{sys-ref, sys3, sys-2} realizes the same masses using a single shared random shift, reducing this to $\mathcal{O}(1)$ per particle — a property we exploit in \textsc{VASR} and derive in Section~\ref{sec:method}.

\section{Variance-Aware Systematic Resampling}
\label{sec:method}

\begin{algorithm}[!htbp]
\caption{\textsc{VASR} / \textsc{VASR-Max}: Variance-Aware Systematic Resampling}
\label{alg:vasr}
\begin{algorithmic}[1]
\REQUIRE Frozen model $p_{\theta}$, reward $r$, particle count $K$, alignment strength $\lambda$, schedule $\mathcal{T} \subset \{1,\ldots,T\}$, mode $\in \{\textsc{VASR}, \textsc{VASR-Max}\}$
\STATE $x_T^{(i)} \sim \mathcal{N}(0,I)$, $w_T^{(i)} \leftarrow 1$ for $i=1,\ldots,K$
\FOR{$t = T, T-1, \ldots, 1$}
  \IF{$t \in \mathcal{T}$}
    \STATE $\tilde{r}_i \leftarrow r(\hat{x}_0(x_t^{(i)},t))$; optionally z-score: $\tilde{r}_i \leftarrow (\tilde{r}_i - \bar{r}_t)/(\sigma_t + \varepsilon)$
    \IF{mode $= \textsc{VASR}$}
      \STATE $m_t^{(i)} \leftarrow K \cdot \dfrac{w_t^{(i)} \exp(\tilde{r}_i)}{\sum_j w_t^{(j)} \exp(\tilde{r}_j)}$, \quad $\tilde w_t^{(i)} \leftarrow w_t^{(i)} / m_t^{(i)}$
      \STATE Reorder by descending $w_t^{(i)} \exp(\tilde{r}_i) / m_t^{(i)}$; draw $U \sim \mathrm{Uniform}(0,1)$; set $N_t^{(i)}$ via~\eqref{eq:systematic}
    \ELSE
      \STATE $m_t^{(i)} \leftarrow K \cdot \dfrac{w_t^{(i)} \exp(\tilde{r}_i)}{\sum_j w_t^{(j)} \exp(\tilde{r}_j)}$, \quad $\tilde w_t^{(i)} \leftarrow w_t^{(i)} / m_t^{(i)}$
      \STATE Reorder by descending $w_t^{(i)} \exp(\tilde{r}_i) / m_t^{(i)}$; draw $U \sim \mathrm{Uniform}(0,1)$; set $N_t^{(i)}$ via~\eqref{eq:systematic}
      \STATE $i^{\star} \leftarrow \arg\max_j\, r(\hat{x}_0(x_t^{(j)},t))$
      \STATE For each slot $s$: if $s$ is an extra copy of its parent, redirect: $\text{indices}[s] \leftarrow i^{\star}$
      \STATE $w_t^{(i)} \leftarrow 1$ for all $i$
    \ENDIF
    \STATE $w_t^{(i)} \leftarrow \tilde w_t^{(i)}$ for surviving offspring
  \ENDIF
  \STATE $x_{t-1}^{(i)} \leftarrow \mathrm{DDIM}_{\eta=1}(x_t^{(i)}, t)$ for all $i$
\ENDFOR
\STATE Draw $N_{\mathrm{eval}}$ samples from $\{x_0^{(i)}\}$ with probabilities $\propto w_0^{(i)} \exp(r(x_0^{(i)}))$
\RETURN Selected samples
\end{algorithmic}
\end{algorithm}

\paragraph{Variance-Decomposition Principle.}
The design of \textsc{VASR} follows from separating the two variance sources introduced by a resampling step. Let $\mathcal{F}_t$ denote the current weighted particle set, let $m_t^{(i)}$ be the expected offspring count assigned to particle $i$, and let $\tilde w_t^{(i)}=w_t^{(i)}/m_t^{(i)}$ be the reweighted offspring weight. Write $S(x_t^{(i)})$ and $\sigma_t^2(x_t^{(i)})$ for the conditional mean and variance of the terminal reward score obtained by continuing particle $i$ through the remaining denoising process.
\begin{proposition}[Stagewise variance and plug-in allocation]
\label{prop:main-stagewise}
Conditional on $\mathcal{F}_t$, the terminal estimator variance decomposes as
\[
\mathrm{Var}(\hat Z_T \mid \mathcal{F}_t)
= V_t^{\mathrm{cont}} + V_t^{\mathrm{res}},
\]
where the expected continuation term is
\begin{equation}
\mathbb{E}[V_t^{\mathrm{cont}}\mid \mathcal{F}_t]
=
\sum_{i=1}^K \frac{(w_t^{(i)})^2}{m_t^{(i)}}\,
\sigma_t^2(x_t^{(i)}).
\label{eq:main-cont-var}
\end{equation}
Here $V_t^{\mathrm{res}}$ is the variance contribution from the random offspring counts.
Using the plug-in scale proxy $\sigma_t(x_t^{(i)}) \propto e^{\tilde r_t^{(i)}}$, the minimizer of~\eqref{eq:main-cont-var} over $\{m:\sum_i m_t^{(i)}=K\}$ satisfies $m_t^{(i)} \propto w_t^{(i)} e^{\tilde r_t^{(i)}}.$
\end{proposition}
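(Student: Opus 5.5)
The estimator under consideration is
\[
\hat Z_T=\sum_{i=1}^K \tilde w_t^{(i)}\sum_{j=1}^{N_t^{(i)}} Y^{(i,j)},
\]
where $Y^{(i,j)}$ is the terminal reward score of the $j$-th offspring of particle $i$. Conditional on $\mathcal F_t$ and the offspring counts $N_t=(N_t^{(1)},\dots,N_t^{(K)})$, the $Y^{(i,j)}$ are independent continuations with mean $S(x_t^{(i)})$ and variance $\sigma_t^2(x_t^{(i)})$. The plan is to apply the law of total variance, conditioning on $N_t$ given $\mathcal F_t$:
\[
\mathrm{Var}(\hat Z_T\mid\mathcal F_t)=\mathbb E\bigl[\mathrm{Var}(\hat Z_T\mid \mathcal F_t,N_t)\bigm|\mathcal F_t\bigr]+\mathrm{Var}\bigl(\mathbb E[\hat Z_T\mid\mathcal F_t,N_t]\bigm|\mathcal F_t\bigr).
\]
The first summand is $\mathbb E[V_t^{\mathrm{cont}}\mid\mathcal F_t]$, with $V_t^{\mathrm{cont}}:=\mathrm{Var}(\hat Z_T\mid\mathcal F_t,N_t)$. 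The second summand is $V_t^{\mathrm{res}}$, the variance from the random offspring counts. In this reading, the displayed equality $V_t^{\mathrm{cont}}+V_t^{\mathrm{res}}$ is an equality once $V_t^{\mathrm{cont}}$ is taken in conditional expectation.

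For the continuation term, conditional independence of the continuations gives
\[
\mathrm{Var}(\hat Z_T\mid\mathcal F_t,N_t)=\sum_i (\tilde w_t^{(i)})^2 N_t^{(i)}\sigma_t^2(x_t^{(i)}).
\]
Taking expectations and using unbiasedness, $\mathbb E[N_t^{(i)}]=m_t^{(i)}$ (Proposition~\ref{prop:unbiased}), together with $\tilde w_t^{(i)}=w_t^{(i)}/m_t^{(i)}$, yields
\[
\sum_i \frac{(w_t^{(i)})^2}{m_t^{(i)}}\,\sigma_t^2(x_t^{(i)}),
\]
which is \eqref{eq:main-cont-var}. For the residual term,
\[
V_t^{\mathrm{res}}=\mathrm{Var}\Bigl(\sum_i \tilde w_t^{(i)}N_t^{(i)}S(x_t^{(i)})\Bigm|\mathcal F_t\Bigr),
\]
which depends only on the covariance of the counts. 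This term is what multinomial or systematic resampling controls, and it needs no further evaluation here.

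For the allocation, substitute the proxy $\sigma_t(x_t^{(i)})=c\,e^{\tilde r_t^{(i)}}$. The problem becomes: minimize $\sum_i a_i^2/m_i$ with $a_i=c\,w_t^{(i)}e^{\tilde r_t^{(i)}}$, subject to $\sum_i m_i=K$ and $m_i>0$. This objective is strictly convex on the positive orthant. The Lagrangian stationarity condition $-a_i^2/m_i^2+\mu=0$ gives $m_i\propto a_i$, so $m_t^{(i)}=K a_i/\sum_j a_j$. Alternatively, Cauchy--Schwarz gives
\[
\Bigl(\sum_i a_i\Bigr)^2\le\Bigl(\sum_i a_i^2/m_i\Bigr)\Bigl(\sum_i m_i\bigr),
\]
with equality iff $m_i\propto a_i$, and this certifies global optimality directly. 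The constant $c$ cancels, giving $m_t^{(i)}\propto w_t^{(i)}e^{\tilde r_t^{(i)}}$.

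The main obstacle is making the setup precise rather than the computation. One must specify $\hat Z_T$ as the reweighted sum over offspring, and justify that continuations are conditionally independent given $(\mathcal F_t,N_t)$. If further resampling steps occur after $t$, this independence holds only if $\sigma_t^2$ is interpreted as the conditional variance of the full downstream estimator started from one copy of $x_t^{(i)}$. I would state that interpretation explicitly, so that it absorbs later stages recursively. The remaining steps are routine.
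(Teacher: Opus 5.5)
Your proposal is correct and follows the paper's proof essentially step for step. Both use the law of total variance conditioned on the offspring counts, and both use conditional independence of continuations to get $\sum_i N_t^{(i)}(\tilde w_t^{(i)})^2\sigma_t^2(x_t^{(i)})$. Both then apply unbiasedness, $\mathbb{E}[N_t^{(i)}\mid\mathcal{F}_t]=m_t^{(i)}$, and a Lagrange-multiplier argument for the allocation; your Cauchy--Schwarz certificate simply replaces the paper's appeal to strict convexity.

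Your explicit offspring-sum form of $\hat Z_T$ and your reading of the $V_t^{\mathrm{cont}}$ notation make precise what the paper leaves implicit. One caveat on your proposed fix: reinterpreting $\sigma_t^2$ as a downstream variance does not by itself restore independence when later resampling steps occur. Those steps couple different lineages through the shared mass normalization and the single uniform draw. The paper makes the same simplifying assumption, so this is a shared limitation rather than a gap in your argument.
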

\begin{proof}
See Appendix~\ref{app:vasr-allocation}.
\end{proof}

\textsc{VASR} evolves $K$ weighted DDIM trajectories ($\eta=1$) and applies a systematic resampling step at a predefined schedule $\mathcal{T} \subset \{1,\ldots,T\}$. At each $t \in \mathcal{T}$, given cumulative masses $M_0 = 0$, $M_k = \sum_{i=1}^k m_t^{(i)}$, a single $U \sim \mathrm{Uniform}(0,1)$ determines all offspring counts via
\begin{equation}
N_t^{(k)} = \sum_{j=1}^K \mathbf{1}\{U + j - 1 \in (M_{k-1}, M_k]\}.
\label{eq:systematic}
\end{equation}

\begin{proposition}[Variance Bounds for Systematic Resampling {\cite{sys-2}}]
\label{prop:sys-var_main}
Under the systematic resampling rule~\eqref{eq:systematic},
\citet{sys-2} establish that every particle satisfies
\begin{equation}
|N_t^{(k)} - m_t^{(k)}| \le 1 \;\text{a.s.},
\qquad
\mathrm{Var}(N_t^{(k)}) \le \tfrac{1}{4},
\label{eq:systematic-bound}
\end{equation}
regardless of mass allocation. In contrast, multinomial resampling induces
variance $\mathrm{Var}(N_t^{(k)}) = m_t^{(k)}(1 - m_t^{(k)}/K) = \mathcal{O}(K)$
for heavy-mass particles~\citep{sys-2}.
\end{proposition}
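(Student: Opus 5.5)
The plan is to write each offspring count in~\eqref{eq:systematic} as the number of points of a randomly shifted integer lattice that fall in an interval. That count can take only two consecutive integer values, and its mean is exactly the interval length. Both bounds in~\eqref{eq:systematic-bound} follow from this, and the multinomial claim is a direct binomial calculation.

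\textbf{Step 1 (lattice-point representation).} Take $U\in(0,1)$; the event $U=0$ has probability zero. Then the points $U+j-1$, $j=1,\dots,K$, are exactly the points of the shifted lattice $U+\mathbb{Z}$ that lie in $(0,K)$. Since $M_0=0$ and $M_K=K$, every interval $(M_{k-1},M_k]$ lies inside $(0,K]$. The point $K$ itself is not in $U+\mathbb{Z}$ almost surely. Hence
\[
N_t^{(k)} = \#\bigl\{ z\in U+\mathbb{Z} : z\in (M_{k-1},M_k] \bigr\} = \lfloor M_k - U\rfloor - \lfloor M_{k-1}-U\rfloor ,
\]
where the second equality holds because the number of lattice points of $U+\mathbb{Z}$ in $(a,b]$ is $\lfloor b-U\rfloor-\lfloor a-U\rfloor$.

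\textbf{Step 2 (two-point support and almost-sure bound).} Any half-open interval of length $m=m_t^{(k)}$ contains either $\lfloor m\rfloor$ or $\lfloor m\rfloor+1$ points of a unit-spaced lattice. This gives $N_t^{(k)}\in\{\lfloor m\rfloor,\lfloor m\rfloor+1\}$, and hence $|N_t^{(k)}-m|<1$ almost surely.

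\textbf{Step 3 (unbiasedness and variance).} For $U\sim\mathrm{Uniform}(0,1)$ and any real $x$, a one-line check gives $\mathbb{E}\lfloor x-U\rfloor = x-1$. Applying this to both floor terms in Step~1 yields $\mathbb{E}[N_t^{(k)}]=M_k-M_{k-1}=m$.

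Now write $N_t^{(k)}=\lfloor m\rfloor+B$, where $B\in\{0,1\}$. Then $B$ is Bernoulli with success probability $p=m-\lfloor m\rfloor$. Therefore $\mathrm{Var}(N_t^{(k)})=p(1-p)\le 1/4$. None of this uses the specific values of the $m_t^{(k)}$ beyond positivity and $\sum_k m_t^{(k)}=K$, which gives the ``regardless of mass allocation'' clause.

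\textbf{Step 4 (multinomial contrast).} Under~\eqref{eq:multinomial}, the marginal distribution of $N_t^{(k)}$ is $\mathrm{Binomial}(K,m/K)$. Its variance is therefore $m(1-m/K)$. If $m=cK$ with $c\in(0,1)$ fixed, this equals $c(1-c)K=\Theta(K)$.

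\textbf{Main obstacle.} The only delicate point is the boundary bookkeeping in Step~1. One must check that the $K$ shifted points exhaust $U+\mathbb{Z}$ within $(0,K]$ and that no point lands exactly on an endpoint $M_k$; both hold almost surely. After that, everything reduces to the elementary floor-function identities above.
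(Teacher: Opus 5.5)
Your proof is correct and follows essentially the paper's route. The unit-spaced grid forces $N_t^{(k)}\in\{\lfloor m_t^{(k)}\rfloor,\lfloor m_t^{(k)}\rfloor+1\}$, the mean $m_t^{(k)}$ makes the excess $\mathrm{Bernoulli}(\{m_t^{(k)}\})$ with variance at most $1/4$, and the multinomial case is the binomial marginal. The only difference is that you get the mean from the floor representation and $\mathbb{E}\lfloor x-U\rfloor=x-1$, whereas the paper integrates $\sum_j\mathbf{1}\{u+j-1\in(M_{k-1},M_k]\}$ over $u\in(0,1)$ and changes variables to obtain the interval length.

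One small nit: the two-point support alone only gives $|N_t^{(k)}-m_t^{(k)}|\le 1$, which is all the statement asks. Your strict inequality also needs the integer-mass case, where Step~3 gives $p=0$ and hence $N_t^{(k)}=m_t^{(k)}$.
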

\begin{proof}
The bounds follow directly from~\citet{sys-2}; we restate them here in our notation for completeness. See Appendix~\ref{app:resampling-variance} for the self-contained derivation.
\end{proof}

This represents a per-particle reduction from $\mathcal{O}(K)$ to $\mathcal{O}(1)$ that is strongest precisely when some masses are large, the regime induced by exponentially tilted rewards. Each offspring inherits weight $\tilde w_t^{(k)} = w_t^{(k)}/m_t^{(k)}$, and systematic resampling remains unbiased so this reweighting preserves the target weighted measure in expectation~(Proposition~\ref{prop:unbiased}). Together, Propositions~\ref{prop:main-stagewise} and~\ref{prop:sys-var_main} provide a complete variance control strategy: the variance-optimal allocation~\eqref{eq:var_opt_mass} minimizes $V_t^{\mathrm{cont}}$, while systematic resampling bounds $V_t^{\mathrm{res}}$. The procedure is summarised in Algorithm~\ref{alg:vasr}; its two key design choices are described below.

\paragraph{Variance-Optimal Mass Allocation.}
We score each particle using the Tweedie reward proxy $\hat{r}_t^{(i)} = r(\hat{x}_0(x_t^{(i)},t))$, optionally z-scored across particles for numerical stability when reward scales vary across steps. The masses are then set by the plug-in continuation-variance criterion in Proposition~\ref{prop:main-stagewise}:
\begin{equation}
m_t^{(i)} \;\propto\; w_t^{(i)}\,e^{\tilde{r}_i},
\qquad \sum_i m_t^{(i)} = K,
\label{eq:var_opt_mass}
\end{equation}
where $\tilde{r}_i$ denotes the standardized score. This allocates offspring in proportion to each particle's weighted reward, concentrating compute on promising trajectories while keeping the total population fixed at $K$. In high-noise regimes where the goal is to obtain a single high-quality sample, such as text-to-image generation with pretrained reward models, stronger selection can improve robustness to noisy intermediate scores. In this regime we concentrate all offspring on the highest-scoring particle,
\begin{equation}
m_t^{(i)} = K \cdot \mathbf{1}\bigl\{i = \textstyle\arg\max_j\,
w_t^{(j)} e^{\tilde{r}_j}\bigr\},
\qquad \tilde w_t^{(i)} = 1,
\label{eq:vasr_max_mass}
\end{equation}
making an explicit bias--variance trade-off in favor of stable high-reward continuation. We refer to this variant as \textsc{VASR-Max}.

\paragraph{Ordering Heuristic.}
Before drawing the systematic grid we reorder particles in decreasing order of $w_t^{(i)} e^{\tilde{r}_i} / m_t^{(i)}$. Under~\eqref{eq:var_opt_mass} this ratio is approximately constant across particles, so neighboring cells in the cumulative-mass interval tend to have similar descendant scores, further reducing the residual variance $V_t^{\mathrm{res}}$ without affecting unbiasedness \cite{sys-2}. We verify this empirically in Appendix~\ref{sec:add_expt}.

\section{Experiments}
\label{sec:experiments}

We present the main empirical results, evaluating \textsc{VASR} across two settings of increasing complexity. We first consider class-conditional posterior sampling on MNIST and CIFAR-10 using \textsc{VASR}, and then large-scale text-to-image generation with Stable Diffusion, where we use the \textsc{VASR-Max} variant (Section~\ref{sec:method}). Unless stated otherwise, results are averaged over five random seeds, with $\lambda = 1.0$ for all methods. We conduct all experiments on 4 NVIDIA GH200 GPUs.

\begingroup
\setlength{\tabcolsep}{3pt}
\renewcommand{\arraystretch}{1.2}
\newcommand{\valunc}[2]{\textnormal{#1}{\scriptsize\,(\textnormal{#2})}}
\newcommand{\highlight}[1]{\cellcolor{blue!10}{#1}}
\begin{table*}[t]
\caption{\textbf{Quantitative comparison on MNIST and CIFAR-10.} We report mean $\pm$ standard deviation over 5 seeds after $10^6$ NFEs. Metrics include FID, MMD, expected reward, and diversity (Div). Best values per column are \textbf{underlined}, and values within 5\% of the best are highlighted. \textsc{VASR} consistently achieves the best or near-best performance across all settings.} 
\label{tab:main_results}
\centering
\resizebox{\textwidth}{!}{%
\begin{tabular}{l cccc cccc cccc}
\toprule
& \multicolumn{4}{c}{\textbf{MNIST}}
& \multicolumn{4}{c}{\textbf{MNIST even/odd}}
& \multicolumn{4}{c}{\textbf{CIFAR-10}} \\
\cmidrule(lr){2-5}\cmidrule(lr){6-9}\cmidrule(lr){10-13}
Algorithm
& FID$_\downarrow$ & MMD$_\downarrow$ & $\log r_\uparrow$ & Div$_\uparrow$
& FID$_\downarrow$ & MMD$_\downarrow$ & $\log r_\uparrow$ & Div$_\uparrow$
& FID$_\downarrow$ & MMD$_\downarrow$ & $\log r_\uparrow$ & Div$_\uparrow$ \\
\midrule
DPS & \valunc{1.008}{4.173} & \valunc{0.360}{0.236} & \valunc{-0.167}{0.125} & \highlight{\valunc{0.487}{0.054}} & \valunc{1.471}{2.059} & \valunc{0.449}{0.144} & \valunc{-0.186}{0.059} & \valunc{0.491}{0.007} & \valunc{0.256}{0.060} & \valunc{1.297}{0.504} & \valunc{-0.273}{0.110} & \highlight{\valunc{0.531}{0.019}} \\
FKD & \valunc{0.037}{0.023} & \valunc{0.163}{0.130} & \valunc{-0.026}{0.013} & \valunc{0.457}{0.050} & \valunc{0.056}{0.022} & \valunc{0.241}{0.112} & \valunc{-0.032}{0.013} & \valunc{0.456}{0.007} & \valunc{0.241}{0.054} & \valunc{0.923}{0.294} & \valunc{-0.181}{0.158} & \highlight{\valunc{0.517}{0.025}} \\
TDS & \valunc{0.101}{0.057} & \valunc{0.485}{0.390} & \underline{\highlight{\valunc{-0.013}{0.030}}} & \valunc{0.429}{0.053} & \valunc{0.143}{0.066} & \valunc{0.785}{0.430} & \valunc{-0.018}{0.032} & \valunc{0.430}{0.010} & \valunc{0.418}{0.175} & \valunc{2.100}{1.221} & \underline{\highlight{\valunc{-0.043}{0.035}}} & \valunc{0.486}{0.036} \\
DAS & \valunc{0.030}{0.015} & \valunc{0.120}{0.076} & \valunc{-0.023}{0.012} & \valunc{0.456}{0.052} & \valunc{0.037}{0.010} & \valunc{0.159}{0.072} & \valunc{-0.027}{0.009} & \valunc{0.457}{0.010} & \valunc{0.205}{0.043} & \valunc{0.738}{0.199} & \valunc{-0.537}{0.236} & \highlight{\valunc{0.532}{0.021}} \\
DTS & \valunc{0.019}{0.002} & \valunc{0.100}{0.016} & \valunc{-0.020}{0.001} & \highlight{\valunc{0.494}{0.003}}
    & \highlight{\valunc{0.014}{0.004}} & \underline{\highlight{\valunc{0.075}{0.029}}} & \underline{\highlight{\valunc{-0.015}{0.004}}} & \underline{\highlight{\valunc{0.616}{0.053}}}
    & \valunc{0.180}{0.036} & \valunc{0.744}{0.183} & \valunc{-0.301}{0.098} & \highlight{\valunc{0.540}{0.019}} \\
\midrule
\textbf{VASR (ours)} & \underline{\highlight{\valunc{0.015}{0.005}}} & \underline{\highlight{\valunc{0.087}{0.043}}} & \valunc{-0.020}{0.002} & \underline{\highlight{\valunc{0.495}{0.092}}}
    & \underline{\highlight{\valunc{0.014}{0.001}}} & \valunc{0.084}{0.010} & \highlight{\valunc{-0.016}{0.001}}& \highlight{\valunc{0.616}{0.056}}
    & \underline{\highlight{\valunc{0.132}{0.026}}} & \underline{\highlight{\valunc{0.598}{0.153}}} & \valunc{-0.205}{0.099} & \underline{\highlight{\valunc{0.541}{0.019}}} \\

\bottomrule
\end{tabular}%
}
\end{table*}
\endgroup

\paragraph{Posterior Sampling under Class Conditioning.}



\begin{figure}[!htbp]
  \centering
  \vspace{-10pt}
  \includegraphics[width=\textwidth]{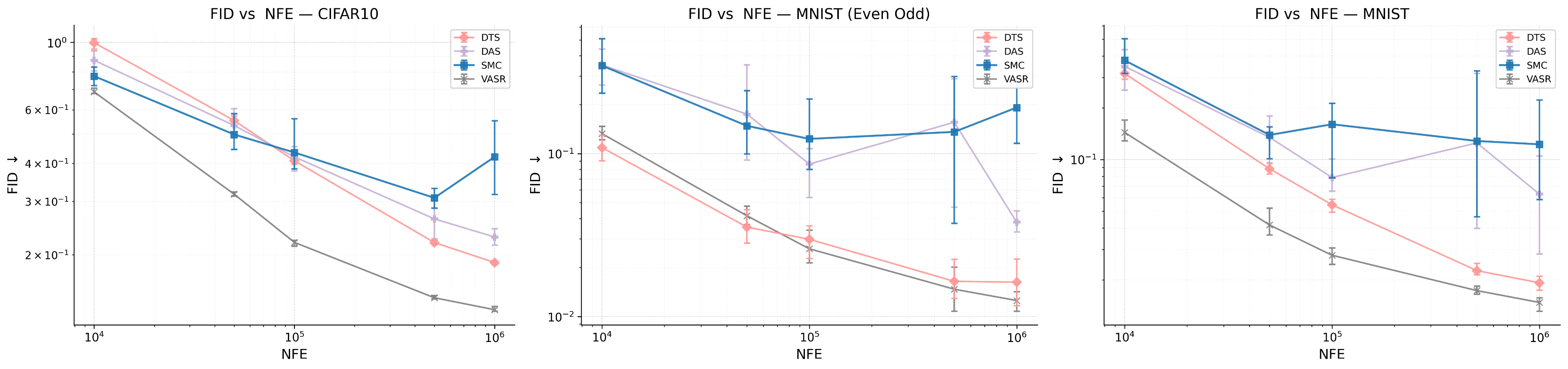}
  \vspace{-10pt}
\caption{
\textbf{Scaling behavior with increasing compute (NFEs).} FID vs NFEs on CIFAR-10 and MNIST. \textsc{VASR} outperforms FKD and DTS across all compute budgets, demonstrating favorable scaling with full parallelism.
}
  \label{fig:scaling}
\end{figure}

We evaluate class-conditional posterior sampling, where the goal is to sample from
$
p(\mathbf{x}\mid c) \propto p_{\theta}(\mathbf{x})\,p(c\mid \mathbf{x}),
$
with $p_{\theta}(\mathbf{x})$ an unconditional diffusion prior and $p(c\mid \mathbf{x})$ a pretrained classifier. Following~\cite{dts}, we consider MNIST and CIFAR-10 across all 10 classes, using the log-classifier likelihood as the reward, $r(\mathbf{x}) = \log p(c\mid\mathbf{x})$, and report results averaged over classes and seeds in Table~\ref{tab:main_results}. For MNIST, we also study a multimodal setting with even digits $\mathcal{C}_{\mathrm{even}}$ and odd digits $\mathcal{C}_{\mathrm{odd}}$, using $r(\mathbf{x}) = \log \max_{c \in \mathcal{C}} p(c\mid\mathbf{x})$, with results averaged over both groups.

We compare \textsc{VASR} against DTS~\cite{dts}, FKD~\cite{fkd}, TDS~\cite{tds}, DAS~\cite{das}, and DPS~\cite{dps}. Table~\ref{tab:main_results} reports mean and standard deviation over five seeds at $10^6$ NFEs, with metrics computed on 5000 samples. For each method, we report the best performance under reward-weighted or uniform final sampling. Figure~\ref{fig:scaling} shows FID versus NFEs to study scaling with compute; DPS and TDS are excluded due to high cross-seed variance that obscures comparison with stronger baselines.

\paragraph{Results.}
On single-class MNIST, \textsc{VASR} achieves the lowest FID and MMD while maintaining competitive reward, better diversity than FKD, and competitive diversity with DTS. This trend holds in the multimodal MNIST (even/odd) setting and extends to CIFAR-10, where \textsc{VASR} again attains lower FID and MMD with competitive reward and diversity. Although TDS achieves high rewards, Figure~\ref{fig:cifar_comparison} shows this is due to mode collapse rather than true alignment with the target posterior. Figure~\ref{fig:scaling} shows that \textsc{VASR} scales consistently with compute, outperforming FKD and DTS at higher NFE budgets while remaining approximately $66\times$ faster than DTS. This is notable since particle-based methods typically exhibit noisy scaling behavior~\cite{dts} (see Figures~\ref{fig:scaling} and~\ref{fig:sd_scaling}). We attribute this to variance-optimal mass allocation and systematic resampling, which bound offspring-count deviations by $|N_t^{(k)} - m_t^{(k)}| \le 1$ and avoid the $\mathcal{O}(K)$ variance of multinomial resampling, improving coverage of the twisted distribution.

Qualitative analysis of CIFAR-10 results (class: car) in Figure~\ref{fig:cifar_comparison} at $10^6$ NFEs shows that FKD and TDS produce near-identical samples, indicating severe collapse, while \textsc{VASR} and DTS preserve substantially higher diversity. In contrast, DPS generates samples outside the base model support, degrading quality despite high diversity, consistent with prior work~\cite{dts}.

\paragraph{Text-to-Image Generation.}
\begin{figure}[!htbp]
    \centering
    \begin{subfigure}[b]{0.49\linewidth}
        \includegraphics[width=\linewidth]{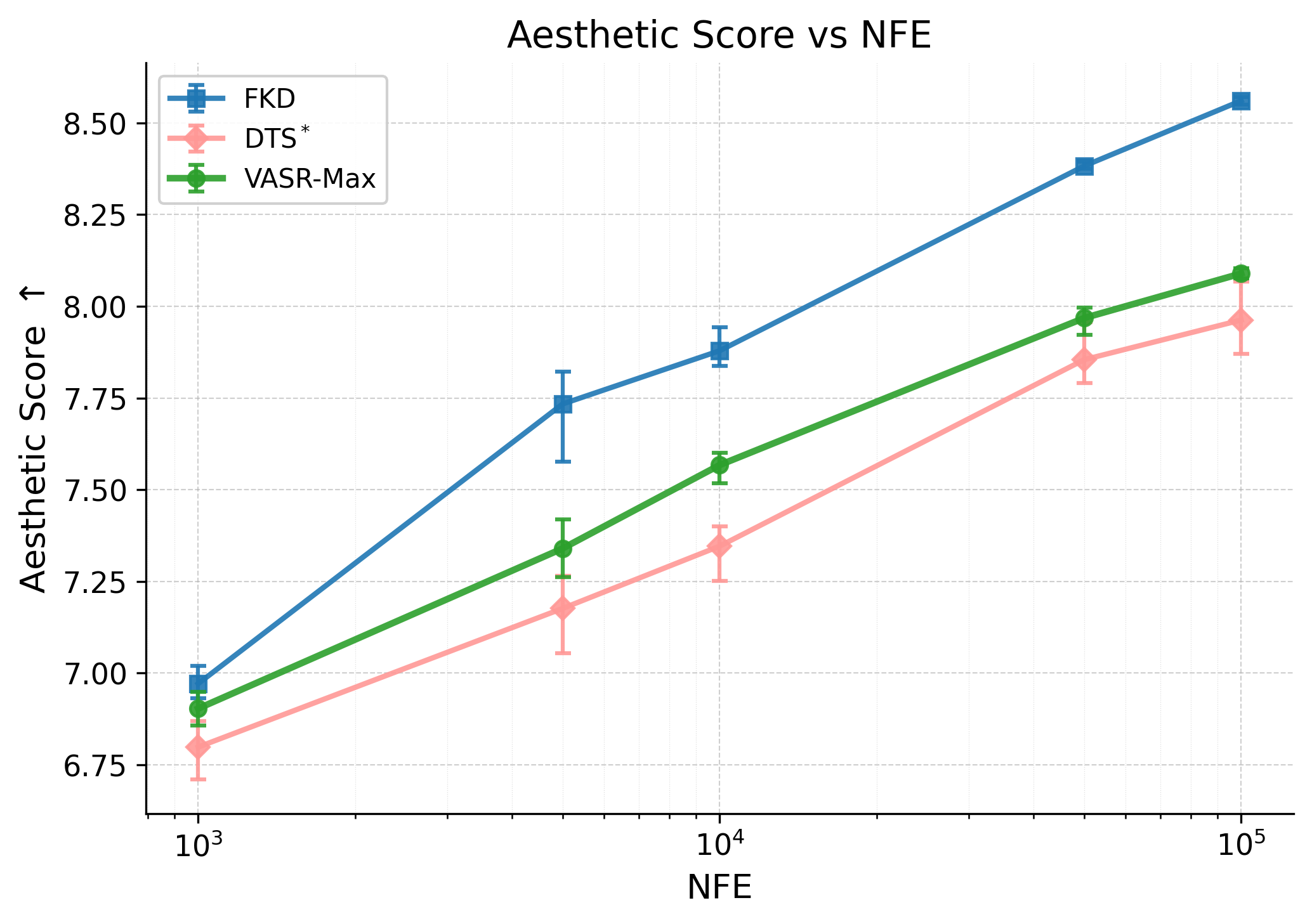}
        \label{fig:aes_scaling}
    \end{subfigure}
    \hfill
    \begin{subfigure}[b]{0.49\linewidth}
        \includegraphics[width=\linewidth]{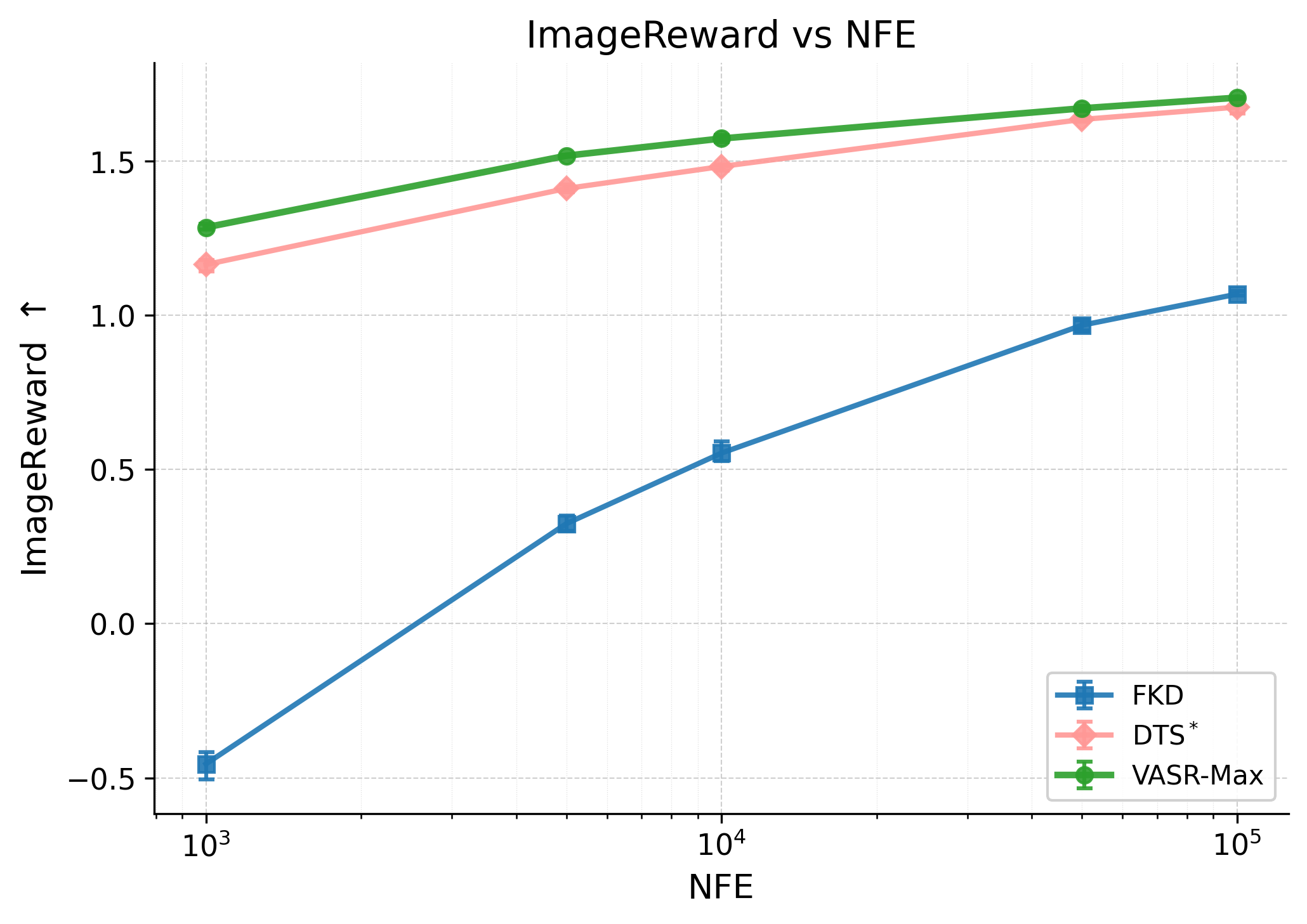}
        \label{fig:ir_scaling}
    \end{subfigure}
    \caption{
    \textbf{Reward scaling with compute in text-to-image generation.} (Left) Aesthetic score on the simple animals benchmark; (right) ImageReward on DrawBench. Methods are evaluated under matched NFE budgets. \textsc{VASR-Max} achieves competitive rewards while avoiding over-optimization and maintaining better efficiency than DTS.
}
    \label{fig:sd_scaling}
\end{figure}

In this setting, we study inference-time alignment for text-to-image generation, aiming to improve prompt adherence and perceptual quality without modifying the diffusion model. This is challenging due to the high-dimensional output space and mismatch between reward models and the data distribution. As discussed in Section~\ref{sec:method}, rewards are computed on the Tweedie proxy $\hat{x}_0(x_t, t)$ of noisy latents, and pretrained SD1.5 reward models produce substantially noisier signals than the log-classifier likelihoods used in pixel-space benchmarks. We therefore report results using \textsc{VASR-Max}.

We evaluate \textsc{VASR-Max} against FKD (strongest SMC baseline) and $\text{DTS}^*$ (value-based baseline) using Stable Diffusion v1.5~\cite{ld} as the generative prior $p_{\theta}(x \mid y)$. Following~\cite{dts}, we use two benchmarks: \textbf{(1) DrawBench}~\cite{drawbench}, evaluated with ImageReward~\cite{imagereward} capturing prompt alignment and human preference; and \textbf{(2) Aesthetic Optimization}~\cite{ddpo}, using 45 animal-category prompts and the LAION Aesthetic Predictor~\cite{aes} to measure visual quality independent of prompt correctness. All methods are evaluated under matched compute budgets with fixed NFEs per prompt.

\paragraph{Results.}
We report average reward across prompts and 3 seeds, selecting the highest-reward sample per prompt for each method, and analyze scaling in Figure~\ref{fig:sd_scaling}. On the aesthetic benchmark \cite{ddpo} (Figure~\ref{fig:sd_scaling}, left), FKD achieves the highest raw rewards but exhibits clear over-fitting, producing samples that deviate from the data distribution (see Figure~\ref{fig:aes}). In contrast, \textsc{VASR-Max} and $\text{DTS}^*$ obtain slightly lower rewards while preserving substantially better visual fidelity, with \textsc{VASR-Max} yielding a better reward scaling. On DrawBench (Figure~\ref{fig:sd_scaling}, right), \textsc{VASR-Max} consistently outperforms FKD and $\text{DTS}^*$ across all compute budgets, achieving up to $1.6\times$ higher rewards than FKD while remaining competitive with $\text{DTS}^*$ despite being approximately $4\times$ faster. Qualitative results in Appendix Figures~\ref{fig:aes} and~\ref{fig:ir} confirm these trends: FKD produces artifact-prone or over-optimized samples, whereas \textsc{VASR-Max} better preserves alignment with both the prompt and the base model distribution.

We attribute FKD's over-optimization and \textsc{VASR-Max}'s superior reward scaling to differences in how they handle noisy reward signals. Tweedie proxy rewards are inherently stochastic at high noise levels, with intermediate scores $\hat{x}_0(x_t, t)$ exhibiting substantial noise even under a perfect reward model (\S~\ref{sec:add_expt}). \textsc{VASR-Max} mitigates this by concentrating offspring on the highest-scoring particle, making selection robust to pointwise noise rather than propagating it through soft allocations, which prevents the accumulation of reward estimation errors.

\section{Discussion}
\label{sec:ablations}

\paragraph{Continuation Variance Control.}

\begin{wrapfigure}{l}{0.48\columnwidth}
  \centering
  \vspace{-10pt}
  \includegraphics[width=\linewidth]{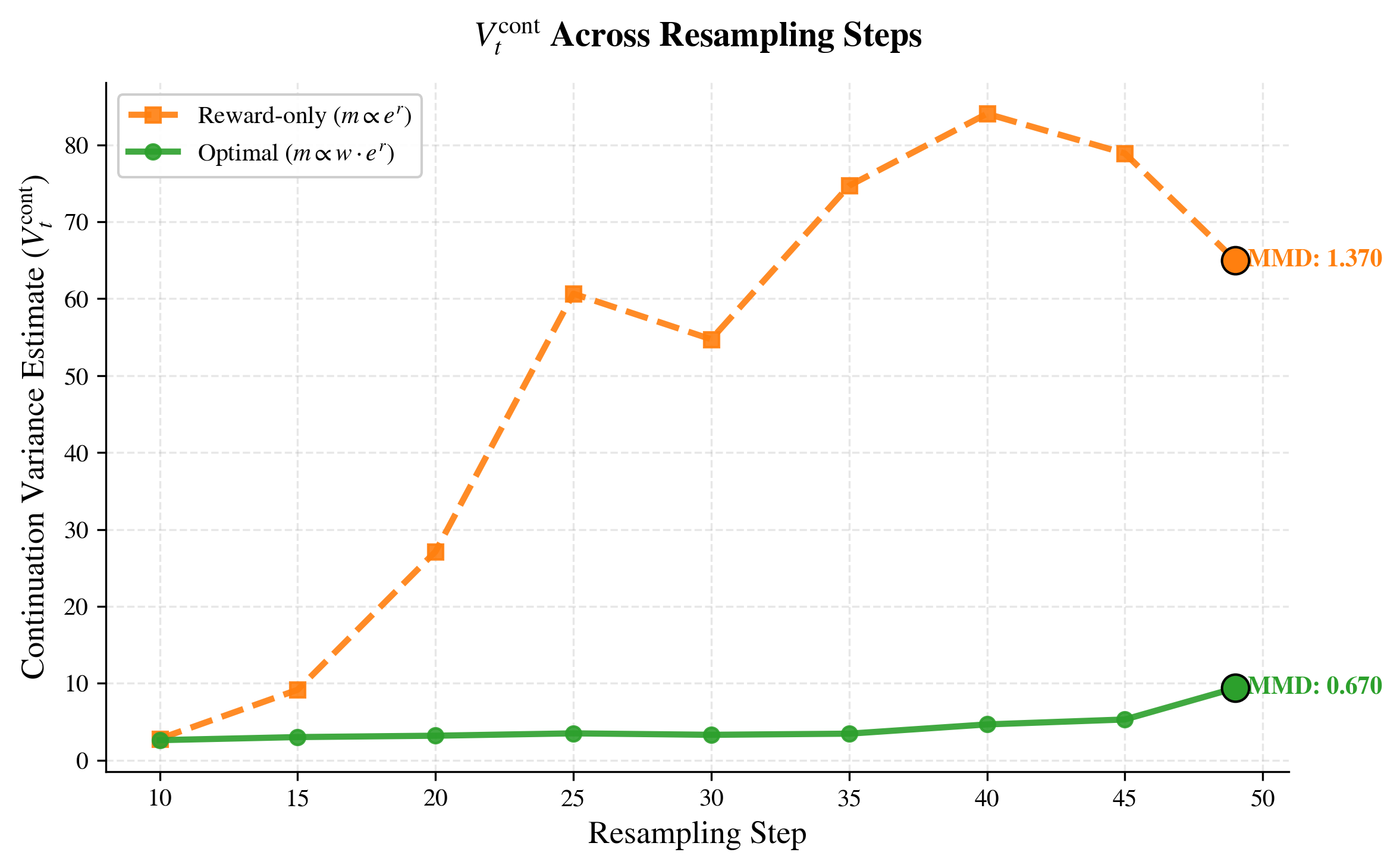}
\caption{
\textbf{$V_t^{\mathrm{cont}}$ under different mass allocations.}
The variance-optimal allocation $m \propto w \cdot e^r$ maintains substantially lower $V_t^{\mathrm{cont}}$ than the baseline $m \propto e^r$ on CIFAR-10 ($K{=}1000$), yielding nearly $2\times$ better terminal MMD.
}
  \vspace{-20pt}
    \label{fig:vcont}
\end{wrapfigure}

To empirically validate the variance-optimal mass allocation from Proposition~\ref{prop:main-stagewise}, we compare our allocation $m_t^{(k)} \propto w_t^{(k)} e^{\tilde{r}_k}$ with the baseline $m_t^{(k)} \propto e^{\tilde{r}_k}$ used in prior SMC methods~\cite{fkd}. Figure~\ref{fig:vcont} shows results on CIFAR-10 with $K{=}1000$ particles. The variance-optimal allocation reduces the continuation variance proxy by approximately $8\times$ and improves terminal MMD by nearly $2\times$, confirming that the allocation directly controls $V_t^{\mathrm{cont}}$ as predicted by the variance decomposition. We further see that lower continuation variance corresponds to better terminal sample quality, validating our prediction that controlling $V_t^{\mathrm{cont}}$ (via optimal allocation) is essential for high-quality reward-guided sampling.

\paragraph{Lineage Diversity and Sample Quality.}

\begin{wrapfigure}{r}{0.48\columnwidth}
  \centering
  \vspace{-10pt}
  \includegraphics[width=\linewidth]{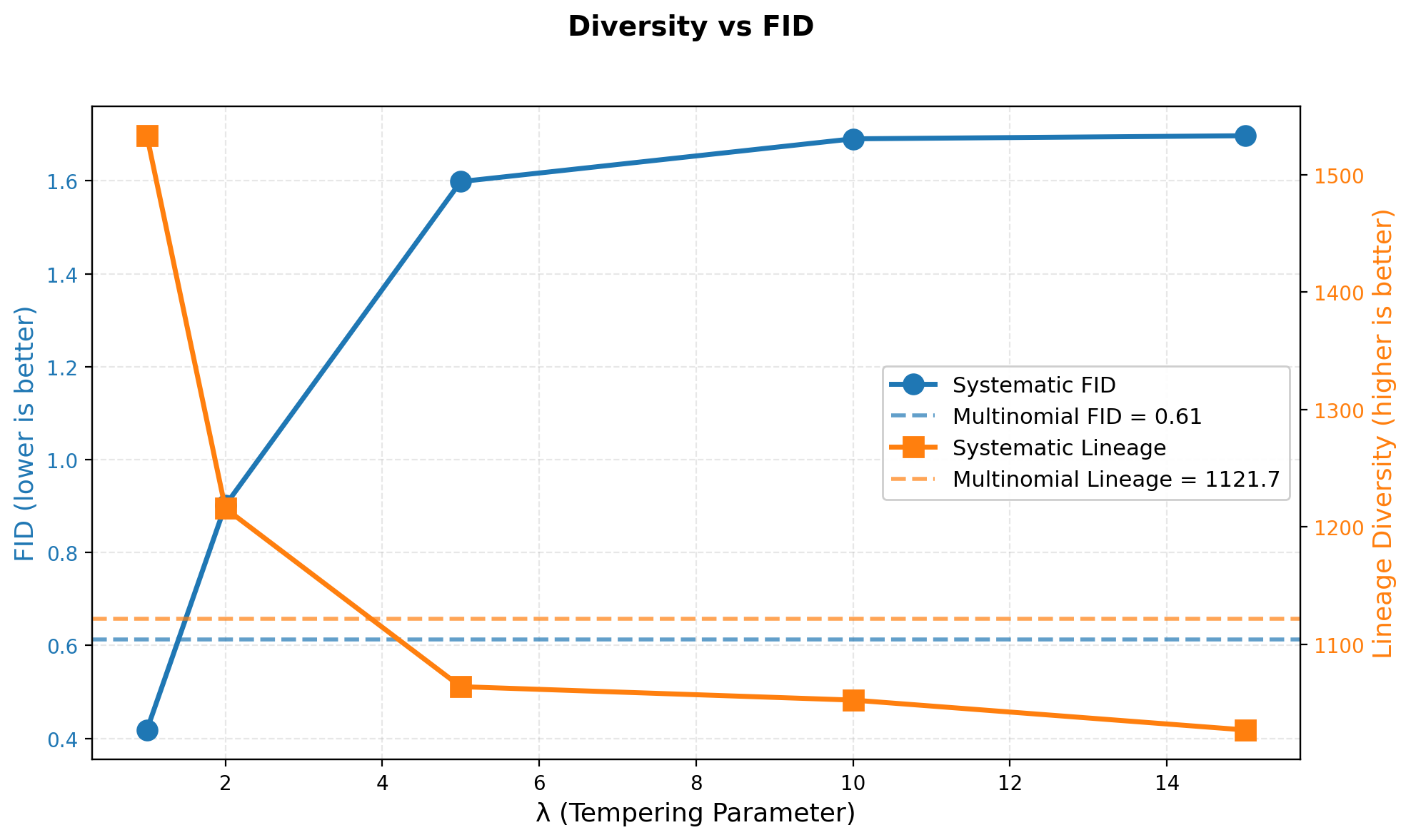}
    \caption{Temperature $\lambda$ ablation. Higher $\lambda$ reduces lineage diversity and degrades FID, approximately matching multinomial's performance at $\lambda{=}4.0$.}
  \vspace{-10pt}
    \label{fig:lambda_ablation}
\end{wrapfigure}

To validate the role of lineage diversity in sample quality, we ablate the temperature parameter $\lambda$ under systematic resampling. As shown in Figure~\ref{fig:lambda_ablation}, increasing $\lambda$ sharpens selection by concentrating mass on higher-reward particles, improving exploitation but reducing lineage diversity and degrading distributional metrics. On CIFAR-10, increasing $\lambda$ from $1.0$ to $15.0$ reduces mean lineage diversity from $\approx1500$ to $\approx1000$, while FID worsens from $\approx 0.40$ to $\approx 1.65$. Crucially, when lineage diversity under systematic resampling approaches that of multinomial resampling, FID degrades to a similar level, indicating that the performance gap is primarily driven by preserved lineage diversity. Thus, \textsc{VASR}'s advantage stems from maintaining a more diverse set of ancestral trajectories, enabling better coverage of the reward-tilted distribution.

\paragraph{Offspring-Count Variance Analysis.}

\begin{figure}[!htbp]
  \centering
  \begin{tabular}{cc}
    \includegraphics[width=0.47\linewidth]{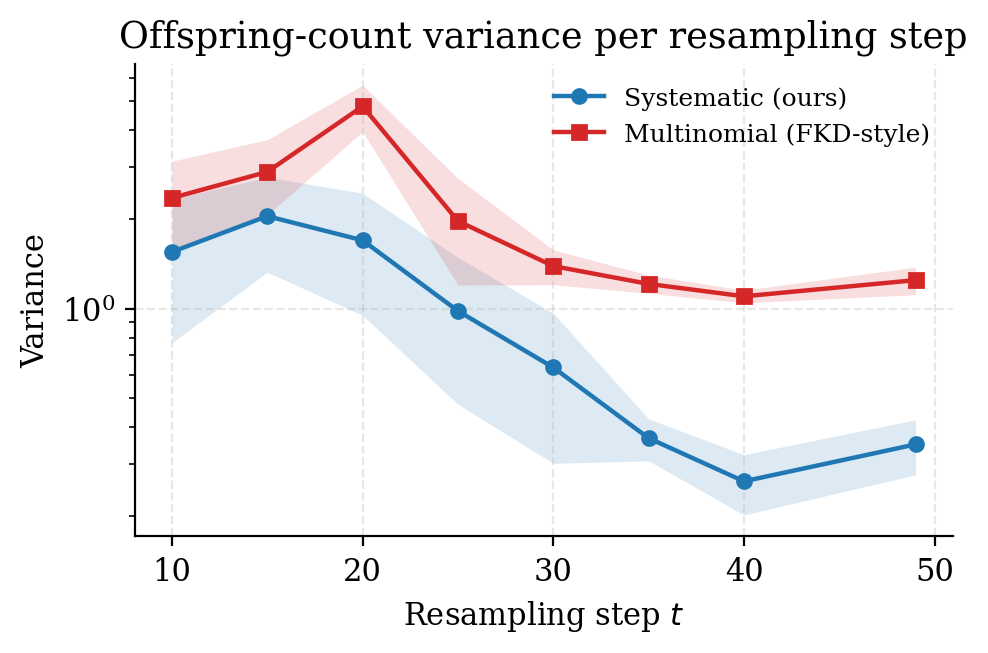} &
    \includegraphics[width=0.47\linewidth]{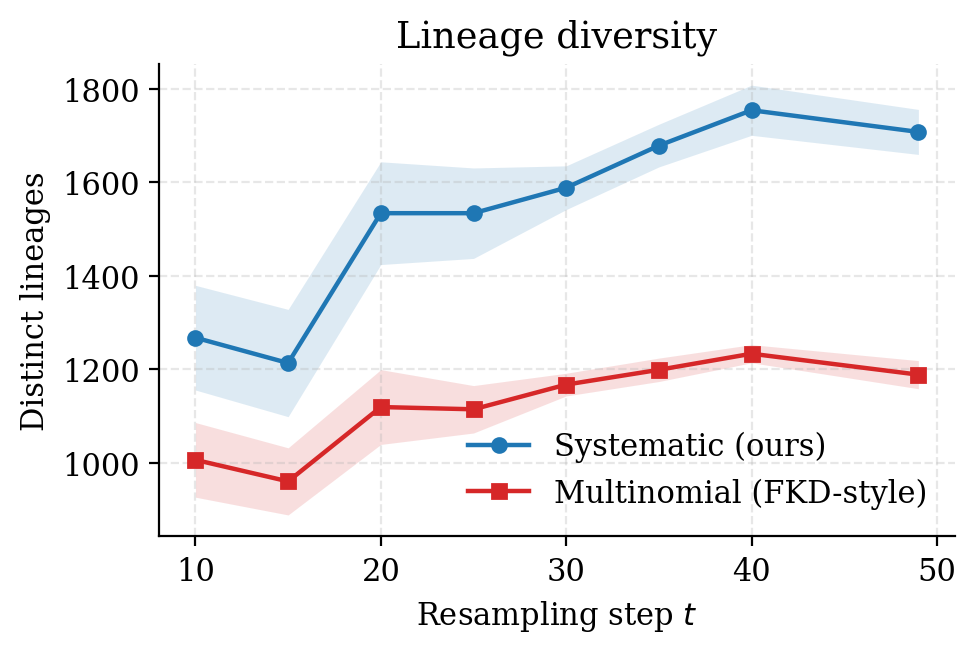} \\
  \end{tabular}
\caption{
  \textbf{Systematic vs.\ Multinomial resampling.}
  \textbf{Left:} Offspring-count variance. Systematic maintains low variance while multinomial exhibits large fluctuations.
  \textbf{Right:} Lineage diversity. Systematic retains more distinct lineages, avoiding multinomial's rapid collapse.
}
  \label{fig:death_main}
\end{figure}

A central design goal of \textsc{VASR} is controlling residual variance $V_t^{\mathrm{res}}$ from offspring sampling while maintaining selection pressure. To isolate the resampling rule's effect on $V_t^{\mathrm{res}}$, we compare systematic vs.\ multinomial resampling under identical mass allocations $m_t^{(k)} \propto w_t^{(k)} e^{\tilde{r}_k}$, where multinomial draws $(N_t^{(1)},\ldots,N_t^{(K)}) \sim \mathrm{Multinomial}(K;\,m_t^{(1)}/K,\ldots,m_t^{(K)}/K)$ and systematic uses~\eqref{eq:systematic}. Figure~\ref{fig:death_main} reports offspring-count variance $\mathrm{Var}(N_t^{(k)})$—the primary contributor to $V_t^{\mathrm{res}}$—averaged across particles, and lineage diversity (the number of distinct particles from step $t{+}1$ that produced at least one offspring at step $t$). Systematic resampling achieves lower $V_t^{\mathrm{res}}$ and retains significantly more lineages than multinomial. This confirms Proposition~\ref{prop:sys-var_main}: multinomial induces $\mathrm{Var}(N_t^{(k)}) = m_t^{(k)}(1 - m_t^{(k)}/K) = \mathcal{O}(K)$ for heavy-mass particles, whereas systematic enforces $|N_t^{(k)} - m_t^{(k)}| \le 1$ almost surely with $\mathrm{Var}(N_t^{(k)}) \le 1/4$ per particle regardless of mass \cite{sys-2}. The lineage gap is pronounced under aggressive tilting, where multinomial's large fluctuations in offspring counts increase $V_t^{\mathrm{res}}$, causing only a few particles to reproduce and collapsing diversity. All results are reported for $K{=}2000$ and $\lambda{=}1.0$, averaged across CIFAR-10 classes over 3 random seeds.

\paragraph{Inference Efficiency.}
\begin{wrapfigure}{r}{0.48\columnwidth}
  \centering
  \vspace{-10pt}
  \includegraphics[width=\linewidth]{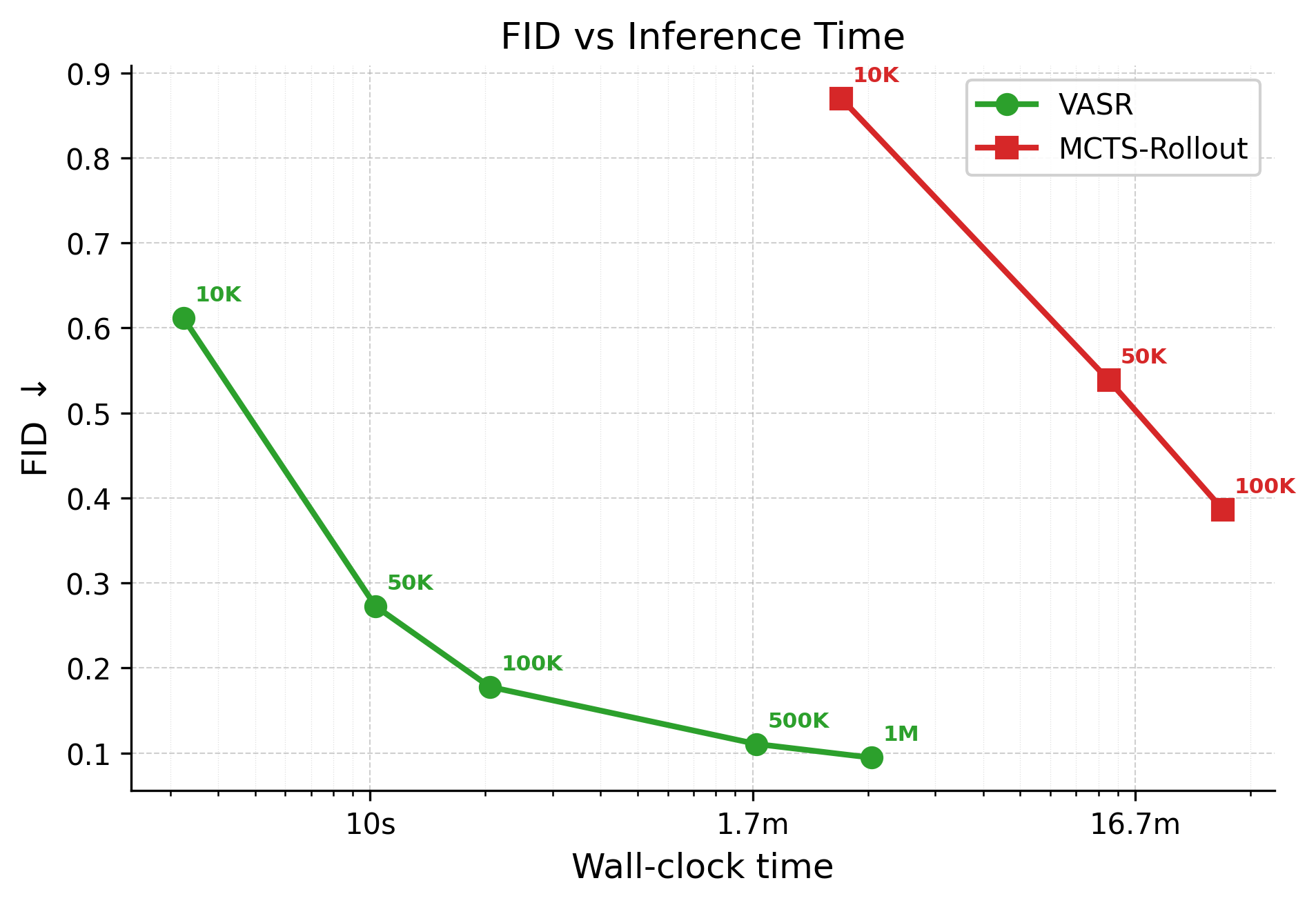}
  \vspace{-10pt}
  \caption{
\textbf{FID vs.\ time on CIFAR-10.} \textsc{VASR} achieves lower FID at all budgets and is $\sim\!66\times$ faster than DTS at matched NFEs; DTS fails to match even the worst \textsc{VASR} FID.}
  \vspace{-15pt}
  \label{fig:time_comparison}
\end{wrapfigure}
Inference-time efficiency is critical in practice, as runtime overhead can offset gains in sample quality. Value-based methods such as DTS~\cite{dts} build a search tree over the denoising trajectory, which is inherently sequential since node expansions depend on prior value estimates, limiting parallelization and increasing runtime, especially at large NFE budgets. In contrast, \textsc{VASR} evolves $K$ particles largely independently and is highly parallelizable. The systematic resampling step is $\mathcal{O}(K)$—requiring only a cumulative-mass computation and a single uniform draw—and is negligible compared to UNet evaluations, enabling efficient parallel execution.

Figure~\ref{fig:time_comparison} shows FID versus wall-clock time on CIFAR-10 (class: car). \textsc{VASR} consistently achieves lower FID than DTS across all budgets and is approximately $66\times$ faster at matched NFE. Even at its maximum runtime, DTS fails to match the best FID of \textsc{VASR}, highlighting its higher computational cost despite amortization via tree caching. This underscores the efficiency and parallelization advantages of particle-based inference.

\section{Conclusions \& Limitations}
\label{sec:conclusion}
We introduce a variance-decomposition framework for reward-guided diffusion SMC that separates estimator variance into continuation variance $V_t^{\mathrm{cont}}$ (from particle propagation) and residual variance $V_t^{\mathrm{res}}$ (from offspring sampling), motivating \textsc{VASR}. It jointly controls both via a variance-optimal mass allocation $m_t^{(k)} \propto w_t^{(k)} e^{\tilde{r}_k}$ that minimizes $V_t^{\mathrm{cont}}$, and systematic resampling that enforces $|N_t^{(k)} - m_t^{(k)}| \le 1$ to bound $V_t^{\mathrm{res}}$. Ablations show this reduces continuation variance by $\approx 8\times$, improves terminal MMD by nearly $2\times$, and preserves $\approx 1.5\times$ more ancestral lineages than multinomial resampling. Across MNIST, CIFAR-10, and Stable Diffusion, \textsc{VASR} achieves stronger distributional quality while being $\sim 4$–$66\times$ faster than tree-search methods at matched compute. In high-noise latent diffusion settings, intermediate Tweedie reward estimates are inherently noisy, motivating \textsc{VASR-Max}, which concentrates offspring on the highest-scoring particle at each step, trading unbiasedness for robustness. While effective, this sacrifices theoretical guarantees, suggesting that better denoising of intermediate rewards could enable fully unbiased variance-optimal allocation. Additional limitations include lack of evaluation on flow matching and consistency models, restriction to single-objective alignment, and absence of multi-objective and multi-modal reward analysis, which remain promising directions for future work.

\bibliographystyle{unsrtnat}
\bibliography{main}

\newpage

\appendix

\section{Additional Experiments \& Details}
 
For all VASR \& VASR-MAX experiments, we use variance-optimal mass allocation ($m_t \propto w_t \cdot e^{r_t}$) with $\lambda{=}1.0$ and the ordering heuristic enabled. Resampling parameters are task-specific: on MNIST and CIFAR-10, we resample every 5 steps from $t{=}10$ to $t{=}40$; for Stable Diffusion aesthetic score optimization (simple animal prompts), we use frequency 5 from $t{=}20$ to $t{=}45$; for ImageReward on DrawBench, we resample every 5 steps from $t{=}10$ to $t{=}40$. For SMC baselines (FKD, TDS, DAS), we found resampling frequency 10 with start and end timesteps $(10, 40)$ to work best across all settings. For MCTS-based methods (DTS), we use default parameters from the original implementation~\cite{dts}, as these yielded the strongest performance in our experiments.

\label{sec:add_expt}
\begin{figure}[!htbp]
    \centering
    \includegraphics[width=\linewidth]{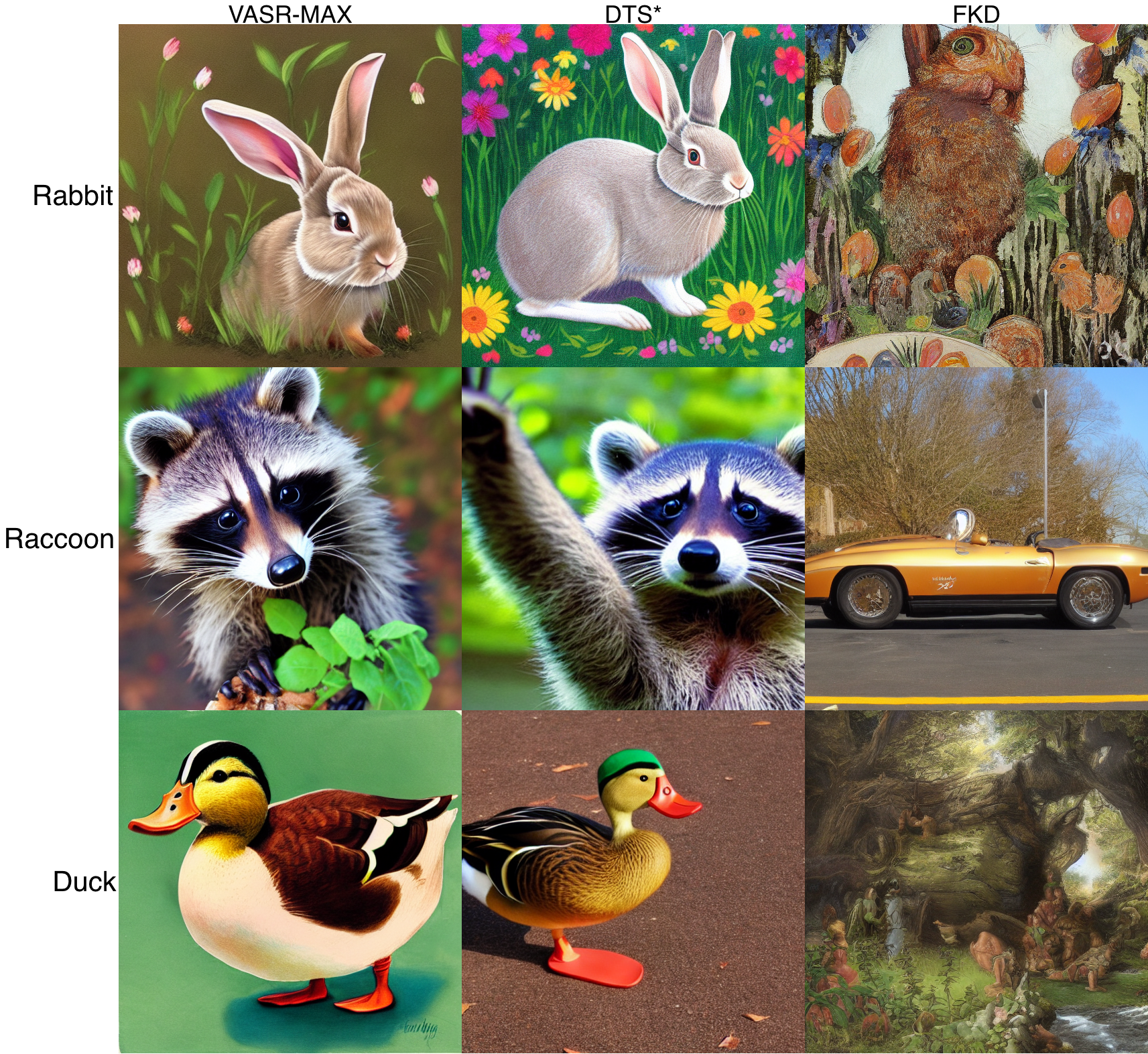}
    \caption{\textbf{Qualitative comparison on aesthetic optimization.} Samples from each method for a 100K NFE budget. As emphasized earlier, FKD achieves a high aesthetic score but produces samples that have overfit to the reward model and do not make sense \cite{seedpo}, a symptom of over-fitting. \textsc{VASR-Max} generates visually appealing images that remain faithful to the prompt.}
    \label{fig:aes}
\end{figure}

\begin{figure}[!htbp]
    \centering
    \includegraphics[width=\linewidth]{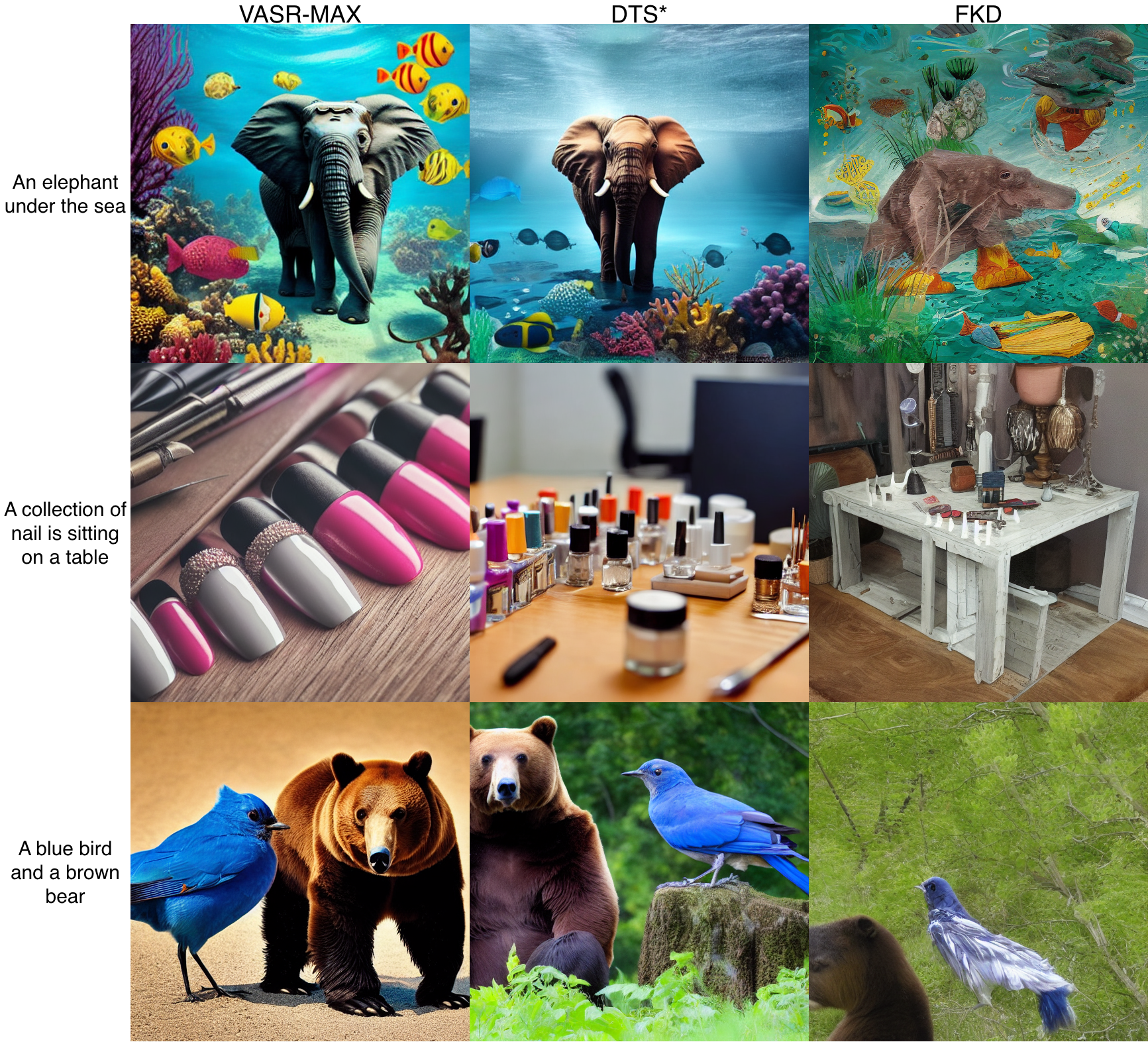}
    \caption{\textbf{Qualitative comparison on DrawBench (ImageReward).}
    Best samples generated by each method for each prompt at a 100K NFE budget. \textsc{VASR-Max} produces samples with higher prompt alignment and perceptual quality compared to DTS and FKD, consistent with the quantitative results in Figure~\ref{fig:sd_scaling}.}
    \label{fig:ir}
\end{figure}

\begin{figure}[!htbp]
  \centering
  \vspace{-10pt}
  \includegraphics[width=\textwidth]{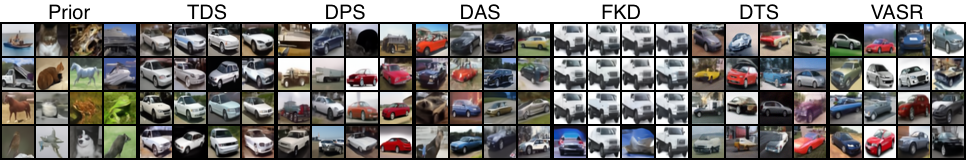}
  \vspace{-10pt}
\caption{
\textbf{Qualitative comparison on CIFAR-10 (class: car).} Samples generated with $10^6$ NFEs, selected via reward-weighted sampling. FKD and TDS exhibit mode collapse with visually similar outputs, while DPS produces out-of-distribution images. \textsc{VASR} and DTS maintain distribution alignment with higher sample diversity.
}

  \label{fig:cifar_comparison}
\end{figure}

\paragraph{Stochastic Reward Noise at Intermediate Timesteps.}
\begin{figure}[t]
    \centering
    \includegraphics[width=\linewidth]{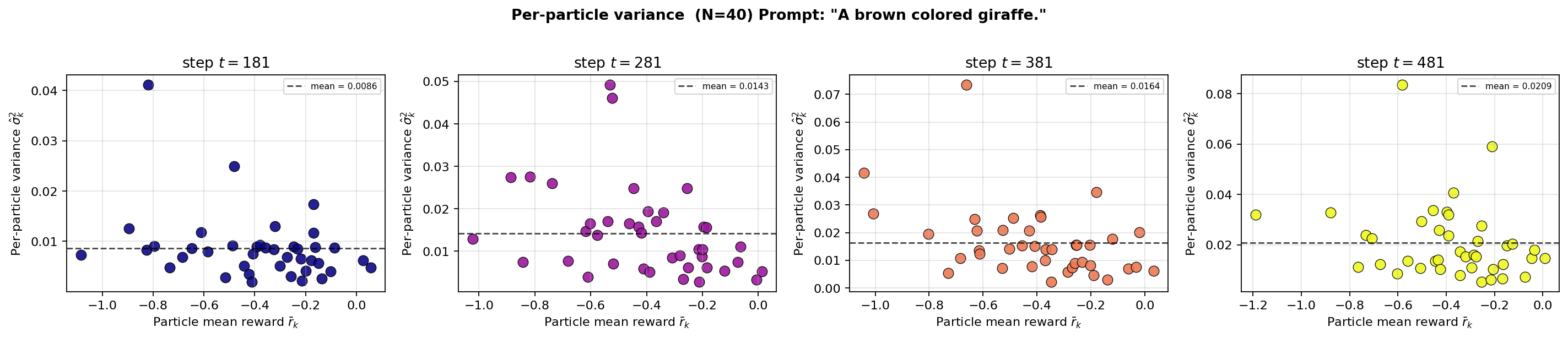}
    \caption{
\textbf{Per-particle reward variance vs.\ mean reward} (prompt: ``A brown colored giraffe.''). Each point is a particle, with the dashed line showing mean variance. Despite variation in mean rewards, within-particle variance remains nearly constant ($0.02$--$0.04$), independent of reward rank.
}
    \label{fig:reward_variance}
\end{figure}

A key assumption underlying \textsc{VASR-Max} is that intermediate reward signals are inherently noisy due to the stochasticity of the decoding process. To verify this, we run $K{=}40$ independent particles through the pipeline, and at each resampling timestep $t$ we draw $M{=}10$ stochastic rollouts ($\eta{=}1$) from each particle and score every terminal image with ImageReward. Figure~\ref{fig:reward_variance} plots the per-particle variance $\widehat{\mathrm{Var}}_k$ against mean reward $\bar{r}_k$ at each diagnostic step. Despite mean rewards spanning $[-1.0, -0.2]$, the within-particle variance remains roughly constant at $0.02$--$0.04$ with no systematic dependence on reward rank. This confirms that score differences between particles at intermediate timesteps are comparable in magnitude to the inherent rollout noise, making a soft proportional allocation unreliable. \textsc{VASR-Max} side-steps this entirely by committing to the argmax particle, whose selection is robust to small additive noise regardless of the absolute score scale.

\paragraph{Ordering Heuristic.}
Before drawing the systematic grid we reorder particles in decreasing order of $w_t^{(i)} e^{\tilde{r}_i} / m_t^{(i)}$. Under the variance-optimal mass allocation~\eqref{eq:var_opt_mass}, this ratio is approximately constant across particles, meaning that neighboring cells in the cumulative-mass interval correspond to particles with similar weighted reward scores. This spatial correlation tends to reduce fluctuations from the random grid placement beyond the $\mathcal{O}(1)$ guarantee of systematic resampling itself. Importantly, because reordering is a deterministic permutation applied before the random grid is drawn, it preserves the unbiasedness of the systematic rule.

Figure~\ref{fig:ordering} empirically confirms this variance reduction on CIFAR-10: ordering substantially reduces mean offspring-count variance at each resampling step while maintaining the bounded variance guarantee. This improved variance control translates directly to better sample quality, with terminal FID improving from $0.27$ (no ordering) to $0.24$ (with ordering). These results demonstrate that the ordering heuristic is a simple yet effective mechanism for enhancing both the statistical efficiency and distributional accuracy of variance-aware systematic resampling.

\begin{figure}[t]
    \centering
    \includegraphics[width=0.7\linewidth]{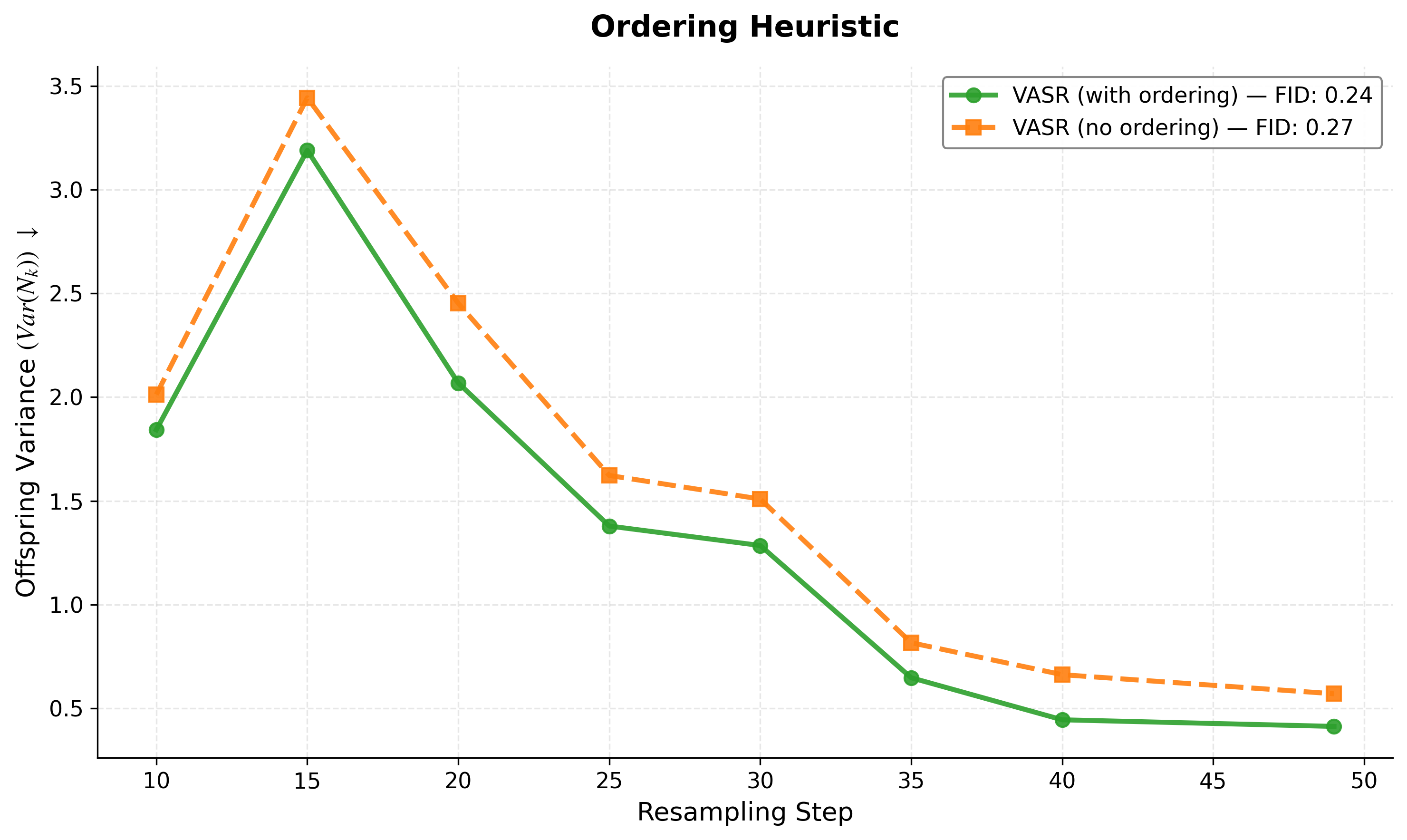}
    \caption{
        \textbf{Effect of ordering heuristic on offspring variance.}
        Ordering particles before systematic resampling reduces offspring-count variance at each step and improves terminal FID.
    }
    \label{fig:ordering}
\end{figure}

\section{Theoretical Results}
\label{app:theory}

This section collects the theoretical results referenced from the main text. Section~\ref{app:resampling-variance} treats the offspring-count variance properties of multinomial and systematic resampling (referenced from Section~\ref{sec:prelim}). Section~\ref{app:vasr-allocation} derives the variance-optimal mass allocation, the unbiasedness guarantee, and the stagewise variance decomposition that motivate the design of \textsc{VASR} (referenced from Section~\ref{sec:method}).

\subsection{Resampling Variance Bounds}
\label{app:resampling-variance}

Throughout this section we work with positive masses $m_t^{(1)},\ldots,m_t^{(K)}$ summing to $K$, and offspring counts $(N_t^{(1)},\ldots,N_t^{(K)})$ drawn under either the multinomial scheme of~\eqref{eq:multinomial} or the systematic scheme of~\eqref{eq:systematic}.

\subsubsection{Multinomial Offspring-Count Variance}
\label{app:multinomial-variance}

\begin{proposition}[Multinomial offspring-count variance~\cite{sys-2}]
\label{prop:multinomial-variance}
Let $(N_t^{(1)},\ldots,N_t^{(K)}) \sim \mathrm{Multinomial}(K;\, p_1,\ldots,p_K)$ with $p_k = m_t^{(k)}/K$ and $\sum_k m_t^{(k)} = K$. Then for every $k \in \{1,\ldots,K\}$,
\begin{equation}
\mathbb{E}[N_t^{(k)}] = m_t^{(k)},
\qquad
\mathrm{Var}(N_t^{(k)}) = m_t^{(k)}\!\left(1 - \frac{m_t^{(k)}}{K}\right).
\end{equation}
In particular, if $m_t^{(k)} = c K$ for some constant $c \in (0,1)$, then $\mathrm{Var}(N_t^{(k)}) = cK(1-c) = \Theta(K)$.
\end{proposition}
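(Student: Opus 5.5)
The plan is to represent the multinomial draw as $K$ independent categorical trials and compute the first two moments of each coordinate directly. Introduce i.i.d.\ indices $I_1,\ldots,I_K$ taking value $k$ with probability $p_k = m_t^{(k)}/K$. Then $(N_t^{(1)},\ldots,N_t^{(K)})$ has the law in~\eqref{eq:multinomial} exactly when $N_t^{(k)} = \sum_{j=1}^K \mathbf{1}\{I_j = k\}$. For a fixed $k$, this writes $N_t^{(k)}$ as a sum of $K$ i.i.d.\ Bernoulli$(p_k)$ indicators, so its marginal law is $\mathrm{Binomial}(K, p_k)$.

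Linearity of expectation then gives $\mathbb{E}[N_t^{(k)}] = K p_k = m_t^{(k)}$. The indicators for different trials $j$ are independent, so their covariances vanish. The variance is therefore the sum of the individual Bernoulli variances:
\[
\mathrm{Var}(N_t^{(k)}) = K p_k(1-p_k) = m_t^{(k)}\Bigl(1 - \frac{m_t^{(k)}}{K}\Bigr).
\]
The constraint $\sum_k m_t^{(k)} = K$ is needed only to guarantee that $(p_1,\ldots,p_K)$ is a genuine probability vector, so that the categorical trials are well defined.

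For the ``in particular'' clause, substitute $m_t^{(k)} = cK$ with a fixed $c \in (0,1)$. This gives $\mathrm{Var}(N_t^{(k)}) = cK(1-c)$. Since $c(1-c)$ is a positive constant independent of $K$, this quantity is both $\mathcal{O}(K)$ and $\Omega(K)$, hence $\Theta(K)$.

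None of these steps is a real obstacle. The only point needing care is the identification of the multinomial coordinate with a binomial sum, that is, checking that the categorical-trial construction reproduces~\eqref{eq:multinomial}. I would state it explicitly rather than appeal to \cite{sys-2}, so that the derivation is self-contained in the same way as the systematic-resampling bound in Proposition~\ref{prop:sys-var_main}.
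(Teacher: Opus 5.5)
Your proposal is correct and follows the paper's proof: you identify each marginal $N_t^{(k)}$ as a count of successes in $K$ i.i.d.\ Bernoulli$(p_k)$ trials, hence $\mathrm{Binomial}(K,p_k)$, read off the mean $Kp_k = m_t^{(k)}$ and variance $Kp_k(1-p_k)$, and substitute $m_t^{(k)}=cK$ for the $\Theta(K)$ claim. Your explicit categorical-trial construction only spells out the binomial identification that the paper states in one line.
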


\begin{proof}
Each marginal $N_t^{(k)}$ is the number of successes in $K$ i.i.d.\ Bernoulli trials with success probability $p_k = m_t^{(k)}/K$, and is therefore distributed as $\mathrm{Binomial}(K, p_k)$. The mean and variance of a binomial are
\[
\mathbb{E}[N_t^{(k)}] = K p_k = m_t^{(k)},
\qquad
\mathrm{Var}(N_t^{(k)}) = K p_k (1 - p_k) = m_t^{(k)}\bigl(1 - m_t^{(k)}/K\bigr).
\]
The scaling claim follows by substituting $m_t^{(k)} = cK$.
\end{proof}

\subsubsection{Systematic Offspring-Count Bounds}
\label{app:systematic-bounds}

\begin{proposition}[Systematic resampling: deviation and variance~\cite{sys-2}]
\label{prop:sys-var}
Let $m_t^{(1)},\ldots,m_t^{(K)}$ be positive masses summing to $K$, with cumulative masses $M_k = \sum_{i=1}^k m_t^{(i)}$ and $M_0 = 0$. Let $U \sim \mathrm{Uniform}(0,1)$ and define the systematic offspring counts
\[
N_t^{(k)} = \sum_{j=1}^K \mathbf{1}\bigl\{U + j - 1 \in (M_{k-1}, M_k]\bigr\}, \qquad k = 1,\ldots,K.
\]
Then:
\begin{enumerate}
\item[(i)] (Unbiasedness) $\mathbb{E}[N_t^{(k)}] = m_t^{(k)}$ and $\sum_{k=1}^K N_t^{(k)} = K$ almost surely.
\item[(ii)] (Bounded deviation) $|N_t^{(k)} - m_t^{(k)}| \le 1$ almost surely.
\item[(iii)] (Variance bound) $\mathrm{Var}(N_t^{(k)}) = \{m_t^{(k)}\}\bigl(1 - \{m_t^{(k)}\}\bigr) \le \tfrac{1}{4}$, where $\{x\} = x - \lfloor x \rfloor$ denotes the fractional part.
\end{enumerate}
\end{proposition}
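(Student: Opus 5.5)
The plan is to rewrite each systematic offspring count as the number of integers in a half-open interval, and then derive (i)--(iii) from elementary properties of the floor function. Everything is a direct computation on a single random variable.

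\textbf{Reformulation.} Fix $k$ and put $a = M_{k-1}$, $b = M_k$, so $b - a = m_t^{(k)} > 0$. Setting $n = j-1$, the count $N_t^{(k)}$ is the number of integers $n\in\{0,\ldots,K-1\}$ with $n \in (a-U,\, b-U]$. Since $U\in(0,1)$ almost surely and $0 \le a < b \le K$, any integer $n$ in this interval satisfies $n > a-U > -1$ and $n \le b-U < K$. Hence $n\in\{0,\ldots,K-1\}$ automatically, and the range restriction on $j$ can be dropped. Using the identity $\#\bigl(\mathbb{Z}\cap(x,y]\bigr) = \lfloor y\rfloor - \lfloor x\rfloor$, this gives
\[
N_t^{(k)} = \lfloor M_k - U\rfloor - \lfloor M_{k-1} - U\rfloor .
\]

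\textbf{Step 1 (total count, part (i)).} Summing over $k$ telescopes to $\lfloor K-U\rfloor - \lfloor -U\rfloor$. For $U\in(0,1)$ this equals $(K-1)-(-1) = K$ almost surely.

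\textbf{Step 2 (unbiasedness, part (i)).} I would show that $\mathbb{E}\lfloor x - U\rfloor = x - 1$ for every real $x$. Indeed, $\lfloor x-U\rfloor = \lfloor x\rfloor$ when $U < \{x\}$ and equals $\lfloor x\rfloor - 1$ when $U > \{x\}$, so the expectation is $\lfloor x\rfloor - (1-\{x\}) = x-1$. Taking the difference at $x = M_k$ and $x = M_{k-1}$ gives $\mathbb{E}[N_t^{(k)}] = M_k - M_{k-1} = m_t^{(k)}$.

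\textbf{Step 3 (parts (ii) and (iii)).} A half-open interval of length $m$ contains either $\lfloor m\rfloor$ or $\lfloor m\rfloor + 1$ integers. To see this, write $y - x = \lfloor m\rfloor + \{m\}$; then $\lfloor y\rfloor - \lfloor x\rfloor \in \{\lfloor m\rfloor,\ \lfloor m\rfloor+1\}$. So $N_t^{(k)}$ takes only the values $\lfloor m_t^{(k)}\rfloor$ and $\lfloor m_t^{(k)}\rfloor + 1$, which immediately gives $|N_t^{(k)} - m_t^{(k)}| \le 1$ almost surely.

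It follows that $N_t^{(k)} - \lfloor m_t^{(k)}\rfloor$ is a Bernoulli variable. By Step 2 its mean is $m_t^{(k)} - \lfloor m_t^{(k)}\rfloor = \{m_t^{(k)}\}$. Therefore
\[
\mathrm{Var}(N_t^{(k)}) = \{m_t^{(k)}\}\bigl(1-\{m_t^{(k)}\}\bigr) \le \tfrac14,
\]
since $p(1-p)\le 1/4$ for all $p\in[0,1]$.

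\textbf{Main obstacle.} The only delicate point is the boundary bookkeeping in the reformulation. One must confirm that the endpoint conventions (open at $M_{k-1}$, closed at $M_k$) match the floor identity, and that the event $U\in\{0,1\}$ or $U = \{x\}$ has probability zero. Once this is checked, the rest is a few lines of algebra.
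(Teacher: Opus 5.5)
Your proof is correct and complete, and its last step is the same as the paper's: two-valuedness plus unbiasedness forces $N_t^{(k)}-\lfloor m_t^{(k)}\rfloor\sim\mathrm{Bernoulli}(\{m_t^{(k)}\})$. You reach those two ingredients by a different route.

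\textbf{The paper's route.} It computes $\mathbb{E}[N_t^{(k)}]$ by integrating over $u$ and substituting $v=u+j-1$. The $K$ shifted unit intervals then tile $(0,K]$, so the expectation becomes the Lebesgue measure of $(M_{k-1},M_k]$. Conservation comes from these intervals partitioning $(0,K]$. The two-value property comes from an informal count of the unit sub-intervals that fit inside $(M_{k-1},M_k]$.

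\textbf{Your route.} You derive the closed form $N_t^{(k)}=\lfloor M_k-U\rfloor-\lfloor M_{k-1}-U\rfloor$. After that, conservation is a telescoping sum and the mean follows from $\mathbb{E}\lfloor x-U\rfloor=x-1$. Two-valuedness is floor arithmetic, namely $\lfloor x+m\rfloor-\lfloor x\rfloor=\lfloor m\rfloor+\lfloor\{x\}+\{m\}\rfloor$ with $\{x\}+\{m\}\in[0,2)$. That identity is the one line your Step 3 leaves implicit.

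\textbf{What each buys.} The paper's integral argument is more general: it shows the expected number of grid points in any measurable $S\subset(0,K]$ is $|S|$, with no closed form needed. Yours is more explicit about the boundary bookkeeping. You check that dropping the range restriction $j\in\{1,\ldots,K\}$ loses nothing and that the half-open conventions match the floor identity; the paper's sub-interval count in (ii) uses both without saying so. Your representation also writes every $N_t^{(k)}$ as an explicit function of the single shared $U$. That is the natural starting point if you later want the off-diagonal covariances in $V_t^{\mathrm{res}}$.
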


\begin{proof}
The bounds in (i) and (ii) follow from \citet{sys-2}; we restate them here in our notation for completeness, and extend to the explicit variance formula in (iii).

Let $S_k = (M_{k-1}, M_k]$ denote the interval of length $m_t^{(k)}$ allocated to particle $k$ on the line $(0, K]$. The point pattern $\{U + j - 1 : j = 1,\ldots,K\}$ consists of $K$ equally spaced points with unit gap, with the first point uniformly distributed in $(0,1]$.

\smallskip
\emph{(i) Unbiasedness.} Conditional on $U$, the indicator $\mathbf{1}\{U + j - 1 \in S_k\}$ is deterministic, and
\[
\mathbb{E}[N_t^{(k)}] = \int_0^1 \sum_{j=1}^K \mathbf{1}\{u + j - 1 \in S_k\}\, du
= \int_{0}^{K} \mathbf{1}\{v \in S_k\}\, dv = |S_k| = m_t^{(k)},
\]
where we changed variables $v = u+j -1$ and used the fact that the $K$ unit-length intervals $(j-1, j]$ for $j = 1,\ldots,K$ partition $(0,K]$. The almost-sure conservation $\sum_k N_t^{(k)} = K$ holds because every one of the $K$ points falls in exactly one $S_k$.

\smallskip
\emph{(ii) Bounded deviation.} Since the $K$ points are equally spaced with gap one, the number of points falling in any half-open interval $S_k$ of length $m_t^{(k)}$ is either $\lfloor m_t^{(k)}\rfloor$ or $\lceil m_t^{(k)}\rceil$:
\begin{itemize}
\item If $m_t^{(k)}$ is an integer, then exactly $m_t^{(k)}$ points fall in $S_k$ for every value of $U$, so $N_t^{(k)} = m_t^{(k)}$ almost surely.
\item If $m_t^{(k)}$ is not an integer, write $m_t^{(k)} = \lfloor m_t^{(k)}\rfloor + \{m_t^{(k)}\}$. The number of unit-length sub-intervals that fit fully inside $S_k$ is $\lfloor m_t^{(k)}\rfloor$, and the residual sub-interval of length $\{m_t^{(k)}\}$ may or may not contain a point depending on the offset $U$. So $N_t^{(k)} \in \{\lfloor m_t^{(k)}\rfloor, \lceil m_t^{(k)}\rceil\}$.
\end{itemize}
In both cases $|N_t^{(k)} - m_t^{(k)}| \le 1$ almost surely.

\smallskip
\emph{(iii) Variance bound.} If $m_t^{(k)}$ is an integer, $N_t^{(k)} = m_t^{(k)}$ almost surely and the variance is zero. Otherwise let $\phi = \{m_t^{(k)}\} \in (0,1)$. From (ii), $N_t^{(k)}$ takes only two values, $\lfloor m_t^{(k)}\rfloor$ and $\lfloor m_t^{(k)}\rfloor + 1$. By (i), $\mathbb{E}[N_t^{(k)}] = m_t^{(k)} = \lfloor m_t^{(k)}\rfloor + \phi$, so
\[
\Pr[N_t^{(k)} = \lceil m_t^{(k)}\rceil] = \phi,
\qquad
\Pr[N_t^{(k)} = \lfloor m_t^{(k)}\rfloor] = 1 - \phi.
\]
Therefore $N_t^{(k)} - \lfloor m_t^{(k)}\rfloor \sim \mathrm{Bernoulli}(\phi)$, and
\[
\mathrm{Var}(N_t^{(k)}) = \phi(1 - \phi) = \{m_t^{(k)}\}\bigl(1 - \{m_t^{(k)}\}\bigr) \le \tfrac{1}{4},
\]
with equality at $\phi = 1/2$.
\end{proof}

\begin{remark}
The variance bound in Proposition~\ref{prop:sys-var}(iii) is independent of $K$ and of $m_t^{(k)}$ itself --- it depends only on the fractional part $\{m_t^{(k)}\}$. Combined with the multinomial variance in Proposition~\ref{prop:multinomial-variance}, this gives the per-particle improvement from $\mathcal{O}(K)$ to $\mathcal{O}(1)$ stated in the main text~\cite{sys-2}.
\end{remark}

\subsection{\textsc{VASR} Mass Allocation and Variance Decomposition}
\label{app:vasr-allocation}

This section derives the unbiasedness guarantee, the variance-optimal mass allocation, and the stagewise variance decomposition that justify the design of \textsc{VASR}. Throughout we write $\tilde{r}_t^{(i)} := r(\hat{x}_0(x_t^{(i)},t))$ for the standardized Tweedie reward proxy evaluated at particle $i$, and $\mathcal{F}_t$ for the sigma-algebra generated by the particle history up to time $t$.

\subsubsection{Unbiasedness of the Reweighted Empirical Measure}
\label{app:unbiased-proof}

\begin{proposition}[\textsc{VASR} preserves the weighted empirical measure]
\label{prop:unbiased}
Let the particle set $\{(x_t^{(i)}, w_t^{(i)})\}_{i=1}^K$ be propagated through one resampling step at time $t \in \mathcal{T}$ with masses $m_t^{(i)}$ and offspring counts $N_t^{(i)}$ satisfying $\mathbb{E}[N_t^{(i)}] = m_t^{(i)}$ and $\sum_i N_t^{(i)} = K$ almost surely. If each offspring of particle $i$ is assigned weight $\tilde w_t^{(i)} = w_t^{(i)}/m_t^{(i)}$, then for every bounded measurable test function $f$ on $\mathcal{X}$,
\[
\mathbb{E}\!\left[\sum_{i=1}^K N_t^{(i)} \tilde w_t^{(i)} f(x_t^{(i)}) \,\middle|\, \mathcal{F}_t\right]
= \sum_{i=1}^K w_t^{(i)} f(x_t^{(i)}).
\]
\end{proposition}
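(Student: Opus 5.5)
The plan is to reduce the statement to linearity of conditional expectation. The key observation is that, conditional on $\mathcal{F}_t$, every quantity in the sum except the offspring counts is fixed. The particle locations $x_t^{(i)}$, the incoming weights $w_t^{(i)}$, and the masses $m_t^{(i)}$ are all $\mathcal{F}_t$-measurable. The masses qualify because, in Algorithm~\ref{alg:vasr}, they are computed deterministically from the current weights and the Tweedie rewards $\tilde r_t^{(i)}$ before the resampling randomness is drawn. Hence the reweighted offspring weights $\tilde w_t^{(i)} = w_t^{(i)}/m_t^{(i)}$ are $\mathcal{F}_t$-measurable as well, and they are well defined because the masses are strictly positive. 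The reordering heuristic is a deterministic permutation of indices fixed before $U$ is drawn, so it is also $\mathcal{F}_t$-measurable and does not affect the argument. The only randomness left in the sum is that of $(N_t^{(1)},\ldots,N_t^{(K)})$.

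The proof then runs in three steps. First, I pull the $\mathcal{F}_t$-measurable factors $\tilde w_t^{(i)} f(x_t^{(i)})$ out of the conditional expectation term by term. Because $f$ is bounded and $0 \le N_t^{(i)} \le K$, every term is integrable, so this step is legitimate. It gives
\[
\sum_{i=1}^K \tilde w_t^{(i)} f(x_t^{(i)})\,\mathbb{E}[N_t^{(i)} \mid \mathcal{F}_t].
\]
Second, I substitute the hypothesis $\mathbb{E}[N_t^{(i)} \mid \mathcal{F}_t] = m_t^{(i)}$. Third, I cancel $m_t^{(i)}$ against the $1/m_t^{(i)}$ in $\tilde w_t^{(i)}$, which leaves $\sum_i w_t^{(i)} f(x_t^{(i)})$.

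The main point needing care is that the hypothesis is stated as an unconditional expectation, $\mathbb{E}[N_t^{(i)}] = m_t^{(i)}$, while the proof needs it conditionally on $\mathcal{F}_t$. For systematic resampling this follows from Proposition~\ref{prop:sys-var}(i). That calculation treats the masses as fixed and integrates only over $U$, which is drawn independently of $\mathcal{F}_t$, so it is exactly the conditional statement. For multinomial resampling the same holds by Proposition~\ref{prop:multinomial-variance}. In the write-up I will state explicitly that "$\mathbb{E}$" in the hypothesis means expectation over the resampling randomness given $\mathcal{F}_t$.

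The almost-sure constraint $\sum_i N_t^{(i)} = K$ is not needed for the identity itself. I will remark that its role is only to keep the population size fixed at $K$.
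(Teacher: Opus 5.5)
Your proposal is correct and follows the paper's proof. Condition on $\mathcal{F}_t$ so that $x_t^{(i)}$, $w_t^{(i)}$ and $m_t^{(i)}$ are fixed, pull out $\tilde w_t^{(i)} f(x_t^{(i)})$ by linearity, substitute $\mathbb{E}[N_t^{(i)}\mid\mathcal{F}_t]=m_t^{(i)}$, and cancel; your notes that the hypothesis must be read conditionally on $\mathcal{F}_t$ (which the paper assumes without comment) and that $\sum_i N_t^{(i)}=K$ is not used are accurate clarifications, not a different route.
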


\begin{proof}
Conditional on $\mathcal{F}_t$, the particles $x_t^{(i)}$ and masses $m_t^{(i)}$ are deterministic. By linearity of expectation and the unbiasedness condition $\mathbb{E}[N_t^{(i)}\mid\mathcal{F}_t]=m_t^{(i)}$,
\[
\mathbb{E}\!\left[\sum_{i=1}^K N_t^{(i)} \tilde w_t^{(i)} f(x_t^{(i)}) \,\middle|\, \mathcal{F}_t\right]
= \sum_{i=1}^K \frac{w_t^{(i)}}{m_t^{(i)}} f(x_t^{(i)})\, m_t^{(i)}
= \sum_{i=1}^K w_t^{(i)} f(x_t^{(i)}). \qedhere
\]
\end{proof}

\begin{remark}
Proposition~\ref{prop:unbiased} holds for any unbiased resampling scheme. It is the foundation of \textsc{VASR}: the choice of resampling rule affects only the variance of the estimator, not its expectation. \textsc{VASR-Max} uses the spike allocation~\eqref{eq:vasr_max_mass}, which assigns zero mass to all but one particle and therefore violates the positivity assumption $m_t^{(i)} > 0$; the unbiasedness guarantee does not apply, and \textsc{VASR-Max} is a deliberately biased estimator chosen for robustness to noisy reward signals.
\end{remark}

\subsubsection{Variance-Optimal Mass Allocation}
\label{app:opt-alloc-proof}

At each $t \in \mathcal{T}$, \textsc{VASR} sets masses as
\begin{equation}
m_t^{(i)} = K \cdot \frac{w_t^{(i)}\,e^{\tilde{r}_t^{(i)}}}{\sum_j w_t^{(j)}\,e^{\tilde{r}_t^{(j)}}},
\label{eq:vasr-mass-app}
\end{equation}
where $\tilde{r}_t^{(i)}$ is the optionally standardized Tweedie reward proxy. We derive this allocation from a plug-in continuation-variance criterion in which the proxy captures the scale of each particle's future terminal score.

Let $S(x_t^{(i)}) := \mathbb{E}[e^{r(X_T)} \mid x_t^{(i)}]$ denote the conditional terminal score of particle $i$, and $\sigma_t^2(x_t^{(i)}) := \mathrm{Var}(e^{r(X_T)} \mid x_t^{(i)})$ its conditional variance under stochastic DDIM propagation ($\eta{=}1$). After resampling, each of the $N_t^{(i)}$ offspring of particle $i$ is propagated independently with weight $\tilde w_t^{(i)} = w_t^{(i)}/m_t^{(i)}$. Taking the expectation of the continuation variance with respect to the unbiased offspring counts gives
\begin{equation}
\mathbb{E}[V_t^{\mathrm{cont}} \mid \mathcal{F}_t]
= \sum_{i=1}^K \frac{(w_t^{(i)})^2}{m_t^{(i)}}\,\sigma_t^2(x_t^{(i)}).
\label{eq:Vt-cont-expected}
\end{equation}

\begin{proposition}[Variance-optimal mass allocation]
\label{prop:opt-alloc}
Let $a_i>0$ denote a proxy for the continuation standard deviation $\sigma_t(x_t^{(i)})$, and suppose $a_i \propto e^{\tilde r_t^{(i)}}$. Then the minimizer of the surrogate
\[
\sum_{i=1}^K \frac{(w_t^{(i)})^2 a_i^2}{m_t^{(i)}}
\]
over the simplex $\{m \in \mathbb{R}_{>0}^K : \sum_i m_t^{(i)} = K\}$ is
\[
m_t^{(i),\star} \propto w_t^{(i)}\,e^{\tilde{r}_t^{(i)}},
\]
which is exactly the allocation used by \textsc{VASR}~\eqref{eq:var_opt_mass}.
\end{proposition}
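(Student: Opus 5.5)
The problem is a minimization of a separable convex function over the open simplex, and the plan is to solve it with Lagrange multipliers, then check the answer with a Cauchy--Schwarz inequality.

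Abbreviate $c_i := w_t^{(i)} a_i > 0$ and $m_i := m_t^{(i)}$. The surrogate becomes $F(m)=\sum_i c_i^2/m_i$, to be minimized subject to $\sum_i m_i = K$ and $m_i>0$.

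\emph{Step 1: convexity.} Each term $c_i^2/m_i$ is strictly convex on $(0,\infty)$. Hence $F$ is strictly convex on the open simplex, and any stationary point of the Lagrangian is the unique global minimizer.

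\emph{Step 2: stationarity.} The Lagrangian is $L(m,\mu)=\sum_i c_i^2/m_i + \mu(\sum_i m_i - K)$. Setting $\partial L/\partial m_i = -c_i^2/m_i^2 + \mu = 0$ gives $m_i = c_i/\sqrt{\mu}$. So $m_i \propto c_i$, and the constraint fixes $m_i^\star = K c_i/\sum_j c_j$, which is strictly positive.

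\emph{Step 3: verification.} To avoid any boundary or attainment issues, I will confirm optimality directly. By Cauchy--Schwarz,
\[
\Bigl(\sum_i c_i\Bigr)^2 = \Bigl(\sum_i \frac{c_i}{\sqrt{m_i}}\sqrt{m_i}\Bigr)^2 \le \Bigl(\sum_i \frac{c_i^2}{m_i}\Bigr)\Bigl(\sum_i m_i\Bigr) = K\,F(m).
\]
So $F(m)\ge (\sum_i c_i)^2/K$ for every feasible $m$. Equality holds iff $c_i/\sqrt{m_i}\propto \sqrt{m_i}$, i.e.\ $m_i\propto c_i$, which is the allocation from Step 2. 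This shows simultaneously that the minimum is attained there, that it is unique, and that the minimal value is $(\sum_i w_t^{(i)}a_i)^2/K$.

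\emph{Step 4: substitute the proxy.} Put in $a_i = \kappa\, e^{\tilde r_t^{(i)}}$ with $\kappa>0$ a common constant. Then $c_i = \kappa\, w_t^{(i)} e^{\tilde r_t^{(i)}}$, and since $\kappa$ cancels in the normalization, $m_t^{(i),\star} = K\, w_t^{(i)}e^{\tilde r_t^{(i)}}/\sum_j w_t^{(j)}e^{\tilde r_t^{(j)}}$. This is exactly \eqref{eq:vasr-mass-app}.

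There is no real technical obstacle. The only point needing care is that the optimization domain is open, so a minimizer is not automatically attained; the Cauchy--Schwarz argument in Step 3 settles this. The substantive modeling assumption, that the proxy scale $a_i \propto e^{\tilde r_t^{(i)}}$ tracks $\sigma_t(x_t^{(i)})$, is part of the hypothesis and needs no proof here.
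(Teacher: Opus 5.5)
Your proof is correct and follows the paper's argument: the same substitution $b_i = w_t^{(i)}a_i$, Lagrange stationarity giving $m_i \propto b_i$, and strict convexity for uniqueness. Your Cauchy--Schwarz step goes beyond the paper's proof. It certifies global optimality and attainment directly on the open simplex, and it also gives the optimal value $(\sum_i w_t^{(i)}a_i)^2/K$.
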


\begin{proof}
Define $b_i := w_t^{(i)} a_i > 0$ and minimize $\sum_i b_i^2 / m_i$ subject to $\sum_i m_i = K$ via Lagrange multipliers:
\[
\mathcal{L}(m,\mu) = \sum_i \frac{b_i^2}{m_i} + \mu\!\left(\sum_i m_i - K\right).
\]
The stationarity condition $\partial\mathcal{L}/\partial m_i = 0$ gives
\[
-\frac{b_i^2}{m_i^2}+\mu=0,
\qquad
m_i^\star = \frac{b_i}{\sqrt{\mu}}.
\]
Thus $m_i^\star \propto b_i = w_t^{(i)}a_i$. If $a_i \propto e^{\tilde r_t^{(i)}}$, then $m_i^\star \propto w_t^{(i)}e^{\tilde r_t^{(i)}}$. The objective is strictly convex in $m$ on the positive orthant, so this stationary point is the unique global minimum on the simplex.
\end{proof}

\begin{remark}
The reward-weighted allocation~\eqref{eq:var_opt_mass} uses $e^{\tilde r_t^{(i)}}$ as a tractable plug-in estimate of future terminal-score scale. If the absolute continuation variance is locally constant, $\sigma_t^2(x_t^{(i)}) \approx \tau_t^2$, the exact minimizer of~\eqref{eq:Vt-cont-expected} reduces to the weight-only allocation $m_i^\star \propto w_t^{(i)}$. In high-noise regimes where the plug-in score scale is less reliable, \textsc{VASR-Max} provides the complementary robust selection rule used in our text-to-image experiments.
\end{remark}

\subsubsection{Stagewise Variance Decomposition}
\label{app:stagewise-proof}

\begin{proposition}[Stagewise variance decomposition]
\label{prop:stagewise-var}
The conditional variance of the terminal estimator at step $t$ admits the decomposition
\[
\mathrm{Var}(\hat Z_T \mid \mathcal{F}_t)
= V_t^{\mathrm{cont}} + V_t^{\mathrm{res}},
\]
where $V_t^{\mathrm{cont}}$ is defined in~\eqref{eq:Vt-cont-expected} and the resampling term is the full quadratic form
\[
V_t^{\mathrm{res}}
= \sum_{i,j=1}^K
\tilde w_t^{(i)}\tilde w_t^{(j)}
S(x_t^{(i)})S(x_t^{(j)})
\mathrm{Cov}(N_t^{(i)},N_t^{(j)}\mid\mathcal{F}_t).
\]
Its diagonal contribution satisfies
\begin{equation}
D_t^{\mathrm{res}}
:= \sum_{i=1}^K
(\tilde w_t^{(i)})^2 S(x_t^{(i)})^2
\mathrm{Var}(N_t^{(i)}\mid\mathcal{F}_t)
\le \frac{1}{4}\sum_{i=1}^K
(\tilde w_t^{(i)})^2 S(x_t^{(i)})^2
\label{eq:Vt-res-diag-bound}
\end{equation}
under systematic resampling.
\end{proposition}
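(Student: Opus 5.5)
The natural route is the law of total variance, conditioning on the offspring counts. Write the terminal estimator after the resampling step at $t$ as
\[
\hat Z_T=\sum_{i=1}^K \tilde w_t^{(i)}\sum_{\ell=1}^{N_t^{(i)}} Y_{i,\ell},
\]
where $Y_{i,\ell}$ is the terminal score $e^{r(X_T)}$ of the $\ell$-th offspring of particle $i$. I will take the following modelling assumption, implicit in the setup before~\eqref{eq:Vt-cont-expected}: conditional on $\mathcal{F}_t$ and on $N_t=(N_t^{(1)},\ldots,N_t^{(K)})$, the $Y_{i,\ell}$ are independent. Each has mean $S(x_t^{(i)})$ and variance $\sigma_t^2(x_t^{(i)})$, since offspring are propagated independently by stochastic DDIM with $\eta=1$. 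Further resampling after $t$ is absorbed into the continuation law, so that $S$ and $\sigma_t^2$ are the conditional moments of the downstream procedure. Let $\mathcal{G}_t=\sigma(\mathcal{F}_t,N_t)$. The conditional variance formula then gives
\[
\mathrm{Var}(\hat Z_T\mid\mathcal{F}_t)=\mathbb{E}\bigl[\mathrm{Var}(\hat Z_T\mid\mathcal{G}_t)\mid\mathcal{F}_t\bigr]+\mathrm{Var}\bigl(\mathbb{E}[\hat Z_T\mid\mathcal{G}_t]\mid\mathcal{F}_t\bigr).
\]
I identify the first term with $V_t^{\mathrm{cont}}$ (in its $\mathcal{F}_t$-averaged form) and the second with $V_t^{\mathrm{res}}$.

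\textbf{Continuation term.} By conditional independence, $\mathrm{Var}(\hat Z_T\mid\mathcal{G}_t)=\sum_i (\tilde w_t^{(i)})^2 N_t^{(i)}\sigma_t^2(x_t^{(i)})$. Taking $\mathbb{E}[\cdot\mid\mathcal{F}_t]$ and using $\mathbb{E}[N_t^{(i)}\mid\mathcal{F}_t]=m_t^{(i)}$ (Proposition~\ref{prop:sys-var}(i), or the unbiasedness hypothesis of Proposition~\ref{prop:unbiased}), together with $\tilde w_t^{(i)}=w_t^{(i)}/m_t^{(i)}$, gives exactly $\sum_i (w_t^{(i)})^2\sigma_t^2(x_t^{(i)})/m_t^{(i)}$. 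This recovers~\eqref{eq:Vt-cont-expected}.

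\textbf{Resampling term.} Here $\mathbb{E}[\hat Z_T\mid\mathcal{G}_t]=\sum_i \tilde w_t^{(i)} S(x_t^{(i)}) N_t^{(i)}$ is a linear form in $N_t$ with $\mathcal{F}_t$-measurable coefficients. Its conditional variance is therefore the quadratic form
\[
\sum_{i,j}\tilde w_t^{(i)}\tilde w_t^{(j)}S(x_t^{(i)})S(x_t^{(j)})\,\mathrm{Cov}(N_t^{(i)},N_t^{(j)}\mid\mathcal{F}_t),
\]
which is the stated $V_t^{\mathrm{res}}$.

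\textbf{Diagonal bound.} I restrict to $i=j$ and apply Proposition~\ref{prop:sys-var}(iii): conditional on $\mathcal{F}_t$, the masses are fixed and only $U$ is random, so $\mathrm{Var}(N_t^{(i)}\mid\mathcal{F}_t)\le 1/4$. Multiplying by the nonnegative factors $(\tilde w_t^{(i)})^2S(x_t^{(i)})^2$ and summing gives~\eqref{eq:Vt-res-diag-bound}. The ordering heuristic does not affect this step, because it is an $\mathcal{F}_t$-measurable permutation applied before $U$ is drawn.

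\textbf{Main obstacle.} The delicate step is justifying the conditional-independence modelling of the continuation, and making precise that the continuation term is meant in its $\mathcal{F}_t$-conditional expectation, as the main-text statement indicates. If later resampling stages are present, I would handle them by an induction or telescoping over stages, which is more involved, rather than by folding them into $S$ and $\sigma_t^2$. In any case I will state explicitly that $V_t^{\mathrm{cont}}$ denotes the averaged quantity, so that the identity holds exactly.
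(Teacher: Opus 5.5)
Your proposal is correct and follows the paper's proof step for step. It applies the law of total variance conditioning on $N_t$, obtains the continuation term from $\mathrm{Var}(\hat Z_T\mid\mathcal{F}_t,N_t)=\sum_i N_t^{(i)}(\tilde w_t^{(i)})^2\sigma_t^2(x_t^{(i)})$ together with $\mathbb{E}[N_t^{(i)}\mid\mathcal{F}_t]=m_t^{(i)}$, writes the resampling term as the covariance quadratic form of a linear form in $N_t$, and gets the diagonal bound from Proposition~\ref{prop:sys-var}(iii).

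Your per-offspring representation of $\hat Z_T$ and your explicit conditional-independence assumption make precise what the paper leaves implicit; its proof likewise treats the post-$t$ continuation as independent DDIM propagation. Note, however, that folding later resampling into $S$ and $\sigma_t^2$ would not preserve independence across parents. So your fallback, restricting to pure continuation or inducting over stages, is the right call.
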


\begin{proof}
Apply the law of total variance to $\hat Z_T = \sum_i N_t^{(i)} \tilde w_t^{(i)} S_T^{(i)}$ with respect to the offspring counts $N$:
\[
\mathrm{Var}(\hat Z_T \mid \mathcal{F}_t)
= \underbrace{\mathbb{E}[\mathrm{Var}(\hat Z_T \mid \mathcal{F}_t, N)\mid\mathcal{F}_t]}_{V_t^{\mathrm{cont}}}
+ \underbrace{\mathrm{Var}(\mathbb{E}[\hat Z_T\mid\mathcal{F}_t,N]\mid\mathcal{F}_t)}_{V_t^{\mathrm{res}}}.
\]
For the first term, conditional on $N$ and $\mathcal{F}_t$, the descendants of distinct particles propagate independently under DDIM ($\eta{=}1$), so $\mathrm{Var}(\hat Z_T \mid \mathcal{F}_t, N) = \sum_i N_t^{(i)}(\tilde w_t^{(i)})^2 \sigma_t^2(x_t^{(i)})$. Taking expectation over $N$ with $\mathbb{E}[N_t^{(i)}]=m_t^{(i)}$ recovers~\eqref{eq:Vt-cont-expected}. For the second term, $\mathbb{E}[\hat Z_T \mid \mathcal{F}_t, N] = \sum_i N_t^{(i)} \tilde w_t^{(i)} S(x_t^{(i)})$, so taking the variance over $N$ gives the covariance quadratic form in the statement. The diagonal bound~\eqref{eq:Vt-res-diag-bound} follows by retaining the $i=j$ terms and applying $\mathrm{Var}(N_t^{(i)}\mid\mathcal{F}_t) \le 1/4$ from Proposition~\ref{prop:sys-var}.
\end{proof}

\begin{remark}
Proposition~\ref{prop:stagewise-var} separates per-step variance into a continuation term $V_t^{\mathrm{cont}}$ and a resampling term $V_t^{\mathrm{res}}$. The allocation \eqref{eq:vasr-mass-app} targets the continuation term through an importance-weighted score-scale surrogate, while systematic resampling controls the offspring-count variances that appear in the diagonal part of $V_t^{\mathrm{res}}$~\cite{sys-2}. The ordering heuristic is not needed for unbiasedness; it is used to reduce the empirical effect of the off-diagonal covariance terms by placing particles with similar weighted continuation scores next to each other on the systematic grid.
\end{remark}

\newpage

\newpage

\end{document}